\documentclass[10pt]{article}

\usepackage[english]{babel}

\usepackage[letterpaper, textwidth=6in, textheight=8in, centering]{geometry}

\usepackage{float}
\usepackage{amsmath}
\usepackage{mathrsfs}
\usepackage{makecell}
\usepackage{booktabs}
\usepackage{setspace}
\usepackage{bm}
\usepackage{tikz}
\usepackage{accents,amssymb,amsthm, color,enumerate,fancyhdr, geometry,hyperref}
\usepackage{empheq}
\usepackage[all,pdf]{xy}
\usepackage{pifont}
\usepackage[perpage,symbol*]{footmisc}
\usepackage{CJKutf8}
\usepackage{cases}
\usepackage{verbatim}
\usepackage{graphicx}
\usepackage[]{hyperref}
\usepackage{subfigure}

\usepackage{graphicx}
\usepackage[sort&compress]{natbib}
\usepackage{doi}
\usepackage{caption}
\usepackage{graphicx}
\usepackage{float}
\usepackage{amsfonts} 
\usepackage{nicematrix}
\usepackage{nicefrac}       
\usepackage{microtype}      
\usepackage{lipsum}		

\newtheorem{theorem}{Theorem}[section]
\newtheorem{lemma}{Lemma}[section]

\newtheorem{definition}{Definition}[section]
\usepackage{graphicx}
\usepackage{algorithm} 
\usepackage{algorithmic} 
\usepackage{multirow} 
\usepackage{amsmath} 
\usepackage{xcolor}
\usepackage{multirow}
\usepackage[normalem]{ulem} 
\usepackage[mathlines]{lineno}

\begin{document}
\title{\bf Complex Diffusion Maps with \texorpdfstring{$\omega$}{w}-Parameterized Kernels Revealing Inherent Harmonic Representations}

\author
{Tongzhen Dang
\thanks{ Institute of Science and Technology for Brain-Inspired Intelligence, Fudan University, Shanghai,  China.  (\href{mailto:tzdang21@m.fudan.edu.cn}{tzdang21@m.fudan.edu.cn})} 
\and Weiyang Ding
\thanks{Corresponding author. Institute of Science and Technology for Brain-Inspired Intelligence, Fudan University, Shanghai, China; Key Laboratory of Computational Neuroscience and Brain-Inspired Intelligence (Fudan University), Ministry of Education, China. (\href{mailto: dingwy@fudan.edu.cn}{dingwy@fudan.edu.cn})}
\and Michael K. Ng
\thanks{Department of Mathematics, Hong Kong Baptist University, Hong Kong, China. (\href{mailto:michael-ng@hkbu.edu.hk}{michael-ng@hkbu.edu.hk})}}

\date{Received: \today}

\maketitle
\begin{abstract}
In this paper, we propose Complex Diffusion Maps (CDM), a novel diffusion mapping framework that aims to reveal the dominant complex harmonics of high-dimensional data. Inspired by the local Gaussian kernel relevant to the heat equation and the nonlocal Schrödinger kernel relevant to the Schrödinger equation, we propose a unified family of $\omega$-parameterized complex-valued kernels for the trade-off between local and nonlocal connections. We establish the theoretical foundation based on the operator spectrum theory, where the corresponding diffusion operator, diffusion distance, and complex harmonic maps are well-defined. An optimization-based interpretation of the maps is also developed, aiming to preserve angular structure in the complex diffusion space rather than relying solely on real-valued magnitude. We extensively evaluate CDM on both synthetic and real-world datasets. The complex-valued kernel amplifies differences among easily confusable samples, improving discriminative power over both linear and nonlinear methods based on real-valued kernels. CDM remains robust in high-noise settings, yielding a clearer eigengap that enhances spectral separation. For resting-state fMRI data, CDM captures more strongly correlated and nonlocal spatiotemporal dynamics. Without task-specific tuning, CDM achieves competitive performance on a public EEG sleep dataset, while maintaining high computational efficiency compared with both traditional machine learning and deep neural network approaches, highlighting its generality and practical value.

{\bf{Key words:}} 
Complex-valued kernel, Heat equation, Schrödinger equation, Diffusion maps, Manifold learning

{\bf{MSC codes:}}
68T10, 68R12, 62H30
\end{abstract}
\section{Introduction}
Exploring the underlying feature patterns of high-dimensional data has always been an important topic in data mining and pattern recognition. Although classical linear methods like PCA~\cite{wold1987principal} and Multidimensional Scaling (MDS)~\cite{davison2000multidimensional} are widely used, they struggle to handle complicated data with underlying nonlinearity. Kernel PCA~\cite{scholkopf1997kernel} extends PCA via the kernel trick to capture nonlinear variations, but it does not explicitly account for the intrinsic manifold geometry of the data. 

Motivated by this limitation, manifold learning methods have been developed, evolving from distance-preserving embeddings to graph-based geometric modeling, and further to probabilistic diffusion and visualization techniques. Early approaches, including Isometric Mapping (ISOMAP)~\cite{tennenbaum2000global} and Locally Linear Embedding (LLE)~\cite{roweis2000nonlinear}, preserve global or local geometric structures. Subsequently, Laplacian eigenmaps~\cite{belkin2003laplacian} and Diffusion Maps (DM)~\cite{coifman2005geometric} construct graph-based operators to characterize the underlying manifold geometry. More recent techniques, including t-Distributed Stochastic Neighbor Embedding (t-SNE)~\cite{van2008visualizing}, adopt a probabilistic formulation to emphasize neighborhood preservation for visualization.

Among these approaches, Diffusion Maps (DM)~\cite{coifman2005geometric} is a powerful method that embeds high-dimensional data into a low-dimensional manifold while preserving the intrinsic geometry encoded by diffusion distances. DM constructs a locally connected affinity matrix based on the Gaussian kernel. It is a widely used nonlinear kernel due to its good theoretical background. It is the fundamental solution to the heat equation~\cite{evans2022partial} and approximates the Laplace–Beltrami operator~\cite{rosenberg1997laplacian} on the manifold, capturing nonlinear structures in the data. However, the Gaussian kernel is limited to real-valued information, potentially overlooking richer phase and magnitude information that resides in a latent complex space. Moreover, its reliance on local similarities may mislead the diffusion process in overlapping sets or high-level noise, resulting in incorrect mappings. 

Although complex-valued variants of the Gaussian kernel have been proposed~\cite{bouboulis2010complex, boloix2015complex}, they are primarily designed for complex-valued signals, while the similarity measure itself remains real-valued. In contrast, genuinely complex-valued formulations have been explored in several fields~\cite{bottcher2024complex}. In quantum physics, real-valued formulations are insufficient to capture physical reality, as processes such as electron transport and magnetic interactions, inherently involve complex-valued representations~\cite{renou2021quantum, chen2022ruling, lieb1993fluxes, fanuel2018magnetic}. In machine learning, complex weights have been found to improve neural network performance in terms of accuracy and convergence~\cite{zhang2021magnet, zhang2021optical, lee2022complex}. Similarly, studies in neuroscience have explored the role of complex-valued dynamics in modeling biological neural activity~\cite{kuznetsov1998elements, gomez2016signal, deco2025complex}. 


Motivated by these developments, Deco et al.~\cite{deco2025complex} introduced Complex Harmonics decomposition (CHARM), a novel complex harmonic framework based on a complex-valued kernel derived from the Schrödinger equation~\cite{evans2022partial}. This formulation is closely related to a broader class of Schrödinger-type models that naturally encode nonlocal interactions. In quantum and statistical physics, fractional Schrödinger operators and models with power-law interaction kernels have been used to describe long-range couplings, where interactions remain non-vanishing even across large distances~\cite{laskin2002fractional, tarasov2006fractional}. Similarly, in the theory of nonlinear PDEs, long-range Schrödinger equations capture effects that decay slowly over time or space, thereby preserving nonlocal influence~\cite{strauss1990nonlinear}.

Along this line, Deco et al. showed that discretizing the linear Schrödinger equation naturally yields a nonlocal propagator kernel~\cite{deco2025complex}. Unlike traditional Gaussian kernels, which enforce locality by rapidly vanishing weights, the Schrödinger kernel assigns complex phase factors that vary with distance while maintaining nonzero coupling between distant nodes. This allows the Schrödinger kernel to be naturally interpreted as implementing nonlocal connections in the system, providing a powerful tool to capture the distributed and coherent nature of brain dynamics~\cite{deco2021rare,deco2025non}. However, CHARM constructs its diffusion operator by taking the elementwise squared modulus of the complex-valued kernel, effectively discarding the phase information and retaining only the amplitude. As a result, the target embedding produced by CHARM remains in the real-valued space. Moreover, CHARM lacks a theoretical justification for using the squared modulus in the diffusion process, leaving open questions regarding the explanation of optimal embedding. 

\begin{figure}[htbp]
    \centering
    \includegraphics[width=0.95\textwidth]{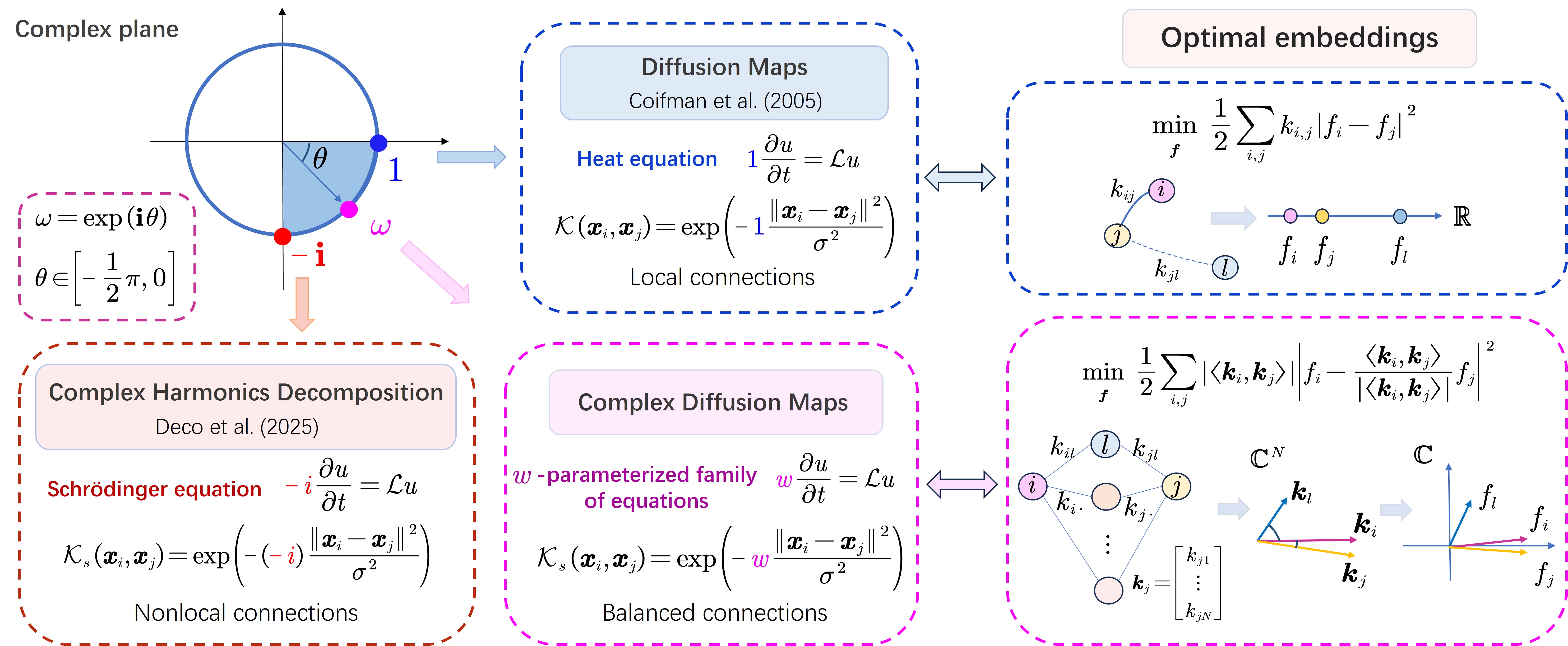}
    \caption{\footnotesize Motivation of the $w$-parameterized complex kernels and the interpretation of optimal embeddings. The Gaussian kernel in Diffusion Maps (DM) is related to the heat equation and mainly captures the local similarity of data. The complex-valued kernel induced by the Schrödinger equation has been shown to encode nonlocal connections through distance-dependent phase, allowing nonzero coupling and thus generating a nonlocal effect between distant points. We propose a generalized family of kernels based on parameter $\omega$, acting as a “dial” controlling the balance between local and nonlocal connections for better adaptation to various data. For optimal embeddings, DM aims to keep locally connected points as close together as possible. Complex Diffusion Maps (CDM) strives to preserve the angle of the global connectivity structure as much as possible in the complex space.}
    \label{complex_kernel}
\end{figure}

The classical DM algorithm constructs an affinity matrix that captures local similarities using the Gaussian kernel induced by the heat equation. The CHARM method leverages the Schrödinger equation to model long-range, nonlocal similarity. However, in a wide range of applications, both local and nonlocal connections play essential roles and should not be neglected. Therefore, we propose Complex Diffusion Maps (CDM), a unified framework that can balance local and nonlocal affinities, supported by comprehensive theoretical analysis and optimization interpretation. Fig.~\ref{complex_kernel} visualizes the motivation for CDM, providing an intuitive overview of the principles designed, naturally leading to the following main contributions of this work.
\begin{itemize}
    \item We propose a family of $\omega$-parameterized complex-valued kernels that unify and generalize the Gaussian kernel reflecting local similarity and the Schrödinger kernel reflecting nonlocal connections to obtain a trade-off via the parameter $\omega$.
    \item We provide a detailed derivation for complex-valued kernels based on operator spectral theory, under which the corresponding diffusion distance and diffusion maps are properly formulated.
    \item We give an optimization-based interpretation of the optimal embeddings, incorporating the angle information of the complex-valued kernel rather than relying solely on its magnitude.
    \item We conduct extensive experiments to verify the effectiveness, robustness and generality of CDM on both synthetic datasets and two real datasets, where CDM exhibits competitive performance compared with machine learning and neural network–based methods.
\end{itemize}

We adopt the following notation conventions in this paper. Scalars are denoted by italic letters such as $x$; vectors by bold lowercase italic letters like $\bm{x}$; matrices by bold uppercase italic letters such as $\bm{A}$; operators by bold upright uppercase letters like $\mathbf{K}$; kernel functions by calligraphic letters such as $\mathcal{K}$; and number fields by blackboard bold letters like $\mathbb{R}$.

This paper is organized as follows. Section~\ref{related work} mainly introduces the classical DM algorithm and its extensions, as well as the Gaussian kernel and Schrödinger kernel induced by physical equations. In Section~\ref{w-para_kernels}, we derive a family of $\omega$-parameterized complex-valued kernels from a set of partial differential equations. Section~\ref{CDM} introduces the CDM algorithm in detail. In Section~\ref{optimal_sec}, we embed the complex harmonics into the corresponding optimization problem and give an explanation of its physical meaning. Finally, Section~\ref{experiments} and~\ref{conclusion} display the experimental results and conclusions, respectively.

\section{Related Work} \label{related work}

\subsection{Diffusion maps}

In this section, we briefly introduce the Diffusion Maps (DM) method~\cite{coifman2005geometric}. Let $\left(\Gamma, \mu \right)$ be a $\sigma$-finite measure space. In many practical applications, $\Gamma$ can be interpreted as a dataset, i.e., $\Gamma \subset \mathbb{R}^M$. Accordingly, $\mu$ denotes the counting measure on $\Gamma$. Define a real-valued symmetric kernel $\mathcal{K}: \Gamma \times \Gamma \to \mathbb{R}$ that quantifies the similarity between two points in $\Gamma$. The symmetrically normalized affinity kernel is defined as 
\begin{displaymath} a\left( \bm{x},\bm{y} \right) =\frac{\mathcal{K} \left( \bm{x},\bm{y} \right)}{\sqrt{v\left( \bm{x} \right)}\sqrt{v\left( \bm{y} \right)}},
\end{displaymath}
where $v(\bm{x})$ is the volume form defined as $v\left( \bm{x} \right) =\int_{\Gamma}{\mathcal{K}\left( \bm{x},\bm{y} \right)  \mathrm{d}\mu \left( \bm{y} \right)}$. Assuming that $v(\bm{x})$ never vanishes and $\mathcal{K}$ is square-integrable and measurable on $(\Gamma,\mu)$, then the diffusion operator $\mathbf{A}:L^2(\Gamma,\mu)\rightarrow L^2(\Gamma,\mu)$ defined as
\begin{equation}\label{eq_DM_operator_A}
\mathbf{A}f(\bm{x})=\int_{\Gamma}a(\bm{x},\bm{y})f(\bm{y})d\mu(\bm{y}),
\end{equation}
is compact and self-adjoint. $\mathbf{A}$ admits the following spectral decomposition~\cite{akhiezer2013theory},
\begin{displaymath}
\mathbf{A}=\sum_{n\in \mathbb{N}}{\lambda _n\left<\cdot,\bm{\phi}_n\right>}\bm{\phi}_n, 
\end{displaymath}
where $\{\lambda_n\}_{n\in\mathbb{N}}$ are the real eigenvalues of $\mathbf{A}$ arranged in non-increasing order, and $\{\bm{\phi}_n\}_{n\in\mathbb{N}}$ are the corresponding orthonormal eigenfunctions. Consequently, 
\begin{displaymath}
a(\bm{x},\bm{y})=\sum_{n\in\mathbb{N}}\lambda_n\bm{\phi}_n(\bm{x})\bm{\phi}_n(\bm{y}).    
\end{displaymath}

For $t \in \mathbb{N}$, the diffusion distance between two points $\boldsymbol{x}$ and $\boldsymbol{y}$ is defined as
\begin{displaymath}
D^t(\bm{x},\bm{y}):=\left\| a^{t}(\bm{x},\cdot )-a^{t}(\bm{y},\cdot ) \right\| _{L^{2}(\Gamma,\mu )}.
\end{displaymath}
where $a^{t}$ denotes the kernel associated with $t$-step diffusion operator $\mathbf{A}^{t}$. Here, $\mathbf{A}^{t}$ denotes the $t$-fold composition of the operator $\mathbf{A}$.

The kernel $a^t(x,\cdot)$ can be interpreted as the probability distribution of reaching
other points from $x$ after $t$ steps~\cite{coifman2005geometric}. 
Therefore, the diffusion distance $D^t(\bm{x},\bm{y})$ measures the discrepancy
between the diffusion distributions starting from $\bm{x}$ and $\bm{y}$. In particular, it is small when $\bm{x}$ and $\bm{y}$ are well connected. 

By substituting the spectral decomposition of the operator $\mathrm{A}$, we obtain
\begin{displaymath}
   D^{t}(\bm{x},\bm{y})=\sqrt{\sum_{n\in \mathbb{N}}
   {\left|\lambda _{n}^{t}\bm{\phi }_n\left( \bm{x} \right) -\lambda _{n}^{t}\bm{\phi }_n\left( \bm{y} \right) \right|^2}}. 
\end{displaymath}
Define the first $s$ maps $\bm{\psi}_s^{t} : \Gamma \rightarrow \mathbb{R}^{s}$ as 
\begin{equation}\label{eq_DM_maps}
    \bm{\psi}_s^{t}(\bm{x})=\left[\begin{array}{llll}\lambda_1^{t} \bm{\phi}_1(\bm{x}), & \lambda_2^{t} \bm{\phi}_2(\bm{x}), & \cdots, & \lambda_s^{t} \bm{\phi}_s(\bm{x})\end{array}\right]^{\top}.
\end{equation}
Thus, 
\begin{equation}\label{eq_DM_dis_eqs_maps}
D^{t}(\bm{x},\bm{y}) \approx \left\| \bm{\psi}_s^{t}(\bm{x})-\bm{\psi}_s^{t}(\bm{y}) \right\| _2.
\end{equation}

Eq.~\eqref{eq_DM_dis_eqs_maps} indicates that DM finds a set of mappings in a low-dimensional space that preserves the diffusion distance in the original manifold space. These maps are defined as $\emph{diffusion maps}$. 

From an optimization perspective, for a discrete dataset $\left\{ \bm{x}_1,\bm{x}_2,\cdots,\bm{x}_N \right\}$, DM aims to solve
\begin{equation}\label{eq_optimal_DM}
   \underset{\bm{f}}{\min}\,\,\frac{1}{2}\sum_{i,j}{k_{ij}}(f_i-f_j)^2,     
\end{equation}
where \( k_{i,j} \coloneqq \mathcal{K}(\bm{x}_i, \bm{x}_j) \) represents the real-valued symmetric weight between samples $\bm{x}_i$ and $\bm{x}_j$, typically defined by the Gaussian kernel. \( \bm{f} \in \mathbb{R}^N \) denotes the embedding function we optimize, subject to a unit-norm constraint~\cite{coifman2005geometric}. Here, we denote \( f_i \coloneqq \bm{f}(\bm{x}_i) \) for simplicity. 

The optimization problem in Eq.~\eqref{eq_optimal_DM} penalizes pairs of nearby points with large differences in their embedding values, encouraging samples with high similarity weights to be mapped close to each other by the function $\bm{f}$.

Let $\bm{K}$ denote the kernel matrix with $\bm{K}_{i,j}=k_{ij}$ and define the diagonal degree matrix $\bm{D}$ by $D_{ii}=\sum_j k_{ij}$. The diffusion operator is then constructed as the normalized diffusion matrix $\boldsymbol{A}$
\begin{displaymath}
\boldsymbol{A}=\boldsymbol{D}^{-1/2}\boldsymbol{K}\boldsymbol{D}^{-1/2}.
\end{displaymath}

The optimal solutions of Eq.~\eqref{eq_optimal_DM} correspond to the leading eigenvectors of the diffusion matrix $\boldsymbol{A}$. These eigenvectors approximate the eigenfunctions $\phi_n$ of the diffusion operator $\mathbf{A}$ defined in Eq.~\eqref{eq_DM_operator_A}~\cite{coifman2005geometric}. Therefore, the optimization variable $\bm{f}$ in Eq.~\eqref{eq_optimal_DM} can be interpreted as a discrete approximation of one eigenfunction $\phi_n$, representing a single coordinate of the diffusion map. By stacking the leading eigenfunctions with their associated diffusion-time eigenvalues $\lambda_n^{t}$, the diffusion map $\bm{\psi}_s^{t}$ in Eq.~\eqref{eq_DM_maps} is obtained, providing a low-dimensional embedding that preserves the diffusion distances between data points.


\subsection{Extensions of Diffusion Maps}\label{extensions_DM}

In this section, we briefly introduce the two classical variants of Diffusion Maps, Vector Diffusion Maps (VDM)~\cite{singer2012vector} and Magnetic Eigenmaps~\cite{fanuel2018magnetic}, aiming to clearly illustrate the connections and differences between our method and these two variants.

One important extension of diffusion maps is the Vector Diffusion Maps (VDM)~\cite{singer2012vector}. VDM was proposed to handle situations where relative rotations among data samples cause their Euclidean distances to overestimate dissimilarity. VDM assigns to each edge an alignment operator that encodes the optimal rotation between neighboring samples, ensuring rotational consistency in the embedding. The corresponding Laplacian is referred to as the \emph{connection Laplacian}, which generalizes the standard graph Laplacian by integrating local geometric transformations. Although this formulation provides a principled way to address misalignment in the data, computing the alignment operators for all pairs of samples can be extremely expensive, as it involves performing two local PCA operations for each sample pair followed by a least-squares computation.

Another related method is the Magnetic eigenmaps~\cite{fanuel2018magnetic}. Magnetic eigenmaps was proposed for embedding directed networks, where standard Laplacians fail to capture asymmetric interactions. The key tool in this approach is the magnetic Laplacian~\cite{shubin1994discrete}, originally derived from the Hamiltonian of a quantum particle under magnetic flux. By attaching a complex phase to each edge, the magnetic Laplacian yields a Hermitian operator whose eigenvectors provide complex-valued embeddings. In this formulation, the symmetric part of the adjacency encodes connection strengths, while the skew-symmetric part is represented through phases, allowing directional relationships to be effectively modeled. This construction has been shown to be particularly useful in applications where directionality is essential~\cite{gomez2021diffusion,he2023diffusion}.

Our method, CDM, is related to both VDM and the Magnetic eigenmaps. VDM addresses rotational misalignments through alignment operators. CDM introduces complex weights directly into the kernel, lifting the real space to the complex domain to couple phase information that may not be captured by the Euclidean distance alone. Moreover, the optimization objective derived in the subsequent section explicitly enforces the preservation of global connectivity in the embedding, rather than relying on local pairwise alignments as in VDM. With respect to magnetic maps, the diffusion operator in our framework is also Hermitian, which makes it formally similar. However, our approach is not motivated by directed graphs or asymmetric interactions but is grounded in the use of complex-valued kernels to enhance the expressive power of diffusion-based embeddings.

Next, we introduce two commonly used kernels and their connections to physical systems, laying the foundation for the generalized complex-valued kernels proposed in the subsequent section.

\subsection{Kernels from physical systems}\label{sec_kernel}

\subsubsection{Heat equation}

The Gaussian kernel is widely used in many fields, such as manifold learning~\cite{belkin2003laplacian}, signal processing~\cite{ito2000gaussian}, and data analysis~\cite{keerthi2003asymptotic}, mainly due to its good mathematical properties and theoretical background. One important theoretical foundation is the heat equation~\cite{evans2022partial}. 

Consider a heat flow on a differentiable Riemannian manifold $\mathcal{M}$. For simplicity, we first consider the one-dimensional case. Let $u(x,t)$ denote the heat distribution at time $t$, where $x\in\mathbb{R}$ denotes a spatial variable. The function  $g:\mathcal{M} \rightarrow \mathbb{R}$ denotes the heat distribution at the initial time. The temperature change at $x$ is related to the average temperature of its surrounding points, so the following equation is used to describe the heat diffusion process,
\begin{displaymath}
\left\{
\begin{aligned}
&\frac{\partial u}{\partial t} = -\mathbf{L} u, \\
&u(x, 0) = g(x),
\end{aligned}
\right.
\end{displaymath}
where $\mathbf{L}$ is the Laplace-Beltrami operator~\cite{rosenberg1997laplacian} of the manifold and $\mathbf{L} u =-\frac{\partial ^2u}{\partial x^2}$. The fundamental solution of the heat equation is given by (see~\cite{evans2022partial})  
\begin{displaymath}
       u\left( x,t \right) =\left( 4\pi t \right) ^{-\frac{1}{2}}\int_{-\infty}^{\infty}{\exp \left( -\frac{\left( x-y \right) ^2}{4t} \right) g\left( y \right) dy}. 
\end{displaymath}
Furthermore, consider
$\mathbf{L} g\left( x \right) =\mathbf{L} u\left( x,0 \right) =-\left. \frac{\partial u}{\partial t} \right|_{t=0}
$, and substitute $u(x,t)$ into it. we obtain, as $t \rightarrow 0$,
\begin{displaymath}
    \mathbf{L}g\left( x \right) \approx \frac{1}{t}\left[ g\left( x \right) -\left( 4\pi t \right) ^{-\frac{1}{2}}\int_{-\infty}^{\infty}{\exp \left( -\frac{\left( x-y \right) ^2}{4t} \right) g\left( y \right) dy} \right].
\end{displaymath}
Using the integral $\int_{-\infty}^{+\infty}{\exp \left( -\frac{y^2}{4t} \right) dy}=\left( 4t\pi \right) ^{\frac{1}{2}}$, we extract the constant coefficient of the first term. Then consider the spatial discretization~\cite{belkin2003laplacian,deco2025complex} and assume that there are $N$ sampling points
$\{x_1,x_2,\dots,x_N\}$. Set $x = x_i$, and $\mathbf{L}$ can be approximated by
\begin{equation}\label{eq_real_lap}
    \mathbf{L}g\left( x_i \right) \approx c\left[ \left( \sum_{j=1}^N{\exp \left( -\frac{\left( x_i-x_j \right) ^2}{4t} \right)} \right) g\left( x_i \right) -\sum_{j=1}^N{\exp \left( -\frac{\left( x_i-x_j \right) ^2}{4t} \right) g\left( x_j \right)} \right] ,
\end{equation}
where the coefficient $c$ is a real number related to $t$. We can take $\exp \left( -\frac{\left( x_i-x_j \right) ^2}{4t} \right)$ as the weight for the approximated discrete Laplacian. The coefficient $c$ does not affect the eigenvectors.

The above derivation is presented for illustration in the one-dimensional case. Replacing the time scale $t$ with a bandwidth parameter
$\sigma$ and extending the formulation to high-dimensional
observations $\bm{x}_i\in\mathbb{R}^d$, the kernel can be generalized as
\begin{displaymath}
    \mathcal{K} (\bm{x}_i,\bm{x}_j)=\exp \left( -\frac{\left\| \bm{x}_i-\bm{x}_j \right\| ^2}{\sigma ^2} \right).
\end{displaymath}
We call the kernel above the \emph{Gaussian kernel} and $\sigma$ the \emph{bandwidth}.

This derivation shows that the Gaussian kernel serves both as the fundamental solution of the heat equation and as an approximation to the Laplace–Beltrami operator~\cite{belkin2006convergence,belkin2008towards}. It focuses on local similarities and can be viewed as a short-range connection. However, nonlocal connections have been shown to be important in neuroscience~\cite{deco2021rare,deco2025complex,deco2025non} and deep learning~\cite{tay2020long}. The next section introduces the recently proposed kernel induced by the Schrödinger equation, which reflects nonlocal similarity.

\subsubsection{Schrödinger equation}

Deco et al.~\cite{deco2025complex,deco2025non} have been inspired by the Schrödinger equation~\cite{griffiths2018introduction} and have developed a complex kernel to capture the nonlocal functional connectivity in human brains. This kernel has been shown to improve information transfer. 

Consider the Schrödinger equation for a free particle in one-dimensional space,
\begin{displaymath}
\left\{
\begin{aligned}
&\mathbf{i}\frac{\partial u_\mathrm{s}}{\partial t}=\mathbf{L}u_\mathrm{s},\\
&u_\mathrm{s}\left( x,0 \right) =g_\mathrm{s}\left( x \right),
\end{aligned}
\right.
\end{displaymath}
where $\mathbf{i}$ is the imaginary unit, and $x\in\mathbb{R}$ denotes the one-dimensional spatial coordinate. $u_\mathrm{s}(x,t)$ and $g_\mathrm{s}$ are complex-valued functions, and $\mathbf{L} u_\mathrm{s} =-\frac{\partial ^2u_\mathrm{s}}{\partial x^2}$. Note that we set the mass of the particle and the Planck constant with ${\hbar}/{2m=1}$for simplicity. 

Comparing the Schrödinger equation with the heat equation, we can see that the former can be obtained simply by replacing the coefficient 1 in the latter with $-\mathbf{i}$. This enables us to follow the same procedure as in the heat equation case. Alternatively, taking the Wick rotation $t \rightarrow \mathbf{i}t$~\cite{zee2010quantum}, we can obtain a similar result to Eq.~\eqref{eq_real_lap} from the Schrödinger perspective
\begin{displaymath}
   \mathbf{L}g_{\mathrm{s}}\left( x_i \right) \approx c_{\mathrm{s}}\left[ \left( \sum_{j=1}^N{\exp \left( \mathbf{i}\frac{\left( x_i-x_j \right) ^2}{4t} \right)} \right) g_{\mathrm{s}}\left( x_i \right) -\sum_{j=1}^N{\exp \left( \mathbf{i}\frac{\left( x_i-x_j \right) ^2}{4t} \right) g_{\mathrm{s}}\left( x_j \right)} \right], 
\end{displaymath}
where the coefficient $c_\mathrm{s}$ is a complex number related to $t$. 

Accordingly, we extend the above formulation from the one-dimensional case to high-dimensional observations $\bm{x}\in\mathbb{R}^d$ as
\begin{equation}\label{eq_schrodinger_kernel}
\mathcal{K} _{\mathrm{s}}(\bm{x}_i,\bm{x}_j)=\exp \left( \mathbf{i}\frac{\left\| \bm{x}_i-\bm{x}_j \right\| ^2}{\sigma ^2} \right),
\end{equation}
We call the kernel the \emph{Schrödinger kernel} and $\sigma$ is a scale parameter.

In~\cite{deco2025complex}, the authors propose the CHARM method, an extension of DM, based on the Schrödinger-induced complex-valued kernel defined in Eq.~\eqref{eq_schrodinger_kernel}.Although the kernel value between $x_i$ and $x_j$ has unit magnitude, their Euclidean distance is encoded in the phase. The interaction does not decay exponentially with distance but instead undergoes pure phase rotation. Therefore, the coupling remains non-vanishing and induces a genuinely nonlocal form of connectivity.

\section{\texorpdfstring{$\omega$}{w}-parameterized Complex Kernels}\label{w-para_kernels}
This section introduces a family of $\omega$-parameterized complex-valued kernels proposed for the trade-off between local and nonlocal connections. This family of complex kernels unifies and generalizes both cases mentioned in Section~\ref{sec_kernel}.

The Gaussian kernel and the Schrödinger kernel introduced above can be viewed as arising from a unified form of governing equations:
\begin{equation}\label{eq_complex_diffusion_eq}
\left\{
\begin{aligned}
&\omega\frac{\partial u_\mathrm{c}}{\partial t}=-\mathbf{L} u_\mathrm{c},\\
&u_\mathrm{c}\left( x,0 \right) =g_\mathrm{c}\left( x \right),  
\end{aligned}
\right.
\end{equation}
where $\omega$ is a complex number with $|\omega|=1$. $u_\mathrm{c}(x,t)$ and $g_\mathrm{c}$ are complex-valued functions and $\mathbf{L} u_\mathrm{c} =-\frac{\partial ^2u_\mathrm{c}}{\partial x^2}$. When $\omega = 1$, it reduces to the heat equation, inducing the Gaussian kernel that emphasizes local similarity. While $\omega = -\mathbf{i}$, it corresponds to the Schrödinger equation and reflects nonlocal connections. If $\omega$ is a complex number on the unit circle from $-\mathbf{i}$ to $1$ as Fig.~\ref{complex_kernel} depicts, that is, $\omega=e^{\mathbf{i}\theta}$ and $\theta\in[-\frac{1}{2}\pi,0]$, then $\omega$ tends to be a “dial” between local and nonlocal connections.

Next, we provide a detailed derivation of the solution to Eq.~\eqref{eq_complex_diffusion_eq}. First, given the form of the Fourier transform used in this paper (another form in which the Fourier transform and its inverse share the same factor of $1/\sqrt{2\pi}$ is also possible),
\begin{displaymath}
    q\left( \lambda \right) =\int_{-\infty}^{\infty}{p\left( \xi \right) e^{-\mathbf{i}\lambda \xi}d\xi},\qquad
p\left( x \right) =\frac{1}{2\pi}\int_{-\infty}^{\infty}{q\left( \lambda \right) e^{\mathbf{i}\lambda x}d\lambda},
\end{displaymath}
where $q(\lambda)$ is the Fourier transform of $p(x)$, denoted by $F\left[ p \right]$; $p(x)$ is the inverse Fourier transform of $q(\lambda)$, denoted by $F^{-1}\left[ q \right]$.

View $t$ in Eq.~\eqref{eq_complex_diffusion_eq} as a parameter, and perform Fourier transform on $x$, denoted by
\begin{displaymath}
F\left[ u_\mathrm{c}\left( x,t \right) \right] =\tilde{u}_\mathrm{c}\left( \lambda ,t \right) ,
\qquad F\left[ g_\mathrm{c}\left( x \right) \right] =\tilde{g}_\mathrm{c}\left( \lambda \right) .
\end{displaymath}
Leveraging the property between the Fourier transform of a function and its derivative $F\left[ p^{\prime}\left( x \right) \right] =\mathbf{i}\lambda F\left[ p\left( x \right) \right] 
$, we can get
\begin{displaymath}
    w\frac{\partial \tilde{u}_\mathrm{c}}{\partial t}=-\lambda ^2\tilde{u}_\mathrm{c}.
\end{displaymath}
For the initial value conditions,
$\tilde{u}_\mathrm{c}\left( \lambda ,0 \right) =\tilde{g}_\mathrm{c}\left( \lambda \right)$,
the above two equations are ordinary differential equations with parameter $\lambda$, and its solution is
\begin{displaymath}
\tilde{u}_\mathrm{c}=\tilde{g}_\mathrm{c}e^{-\lambda ^2\bar{\omega}t}.
\end{displaymath}
Note that the condition $|\omega|=1$ is used here, thus $\frac{1}{\omega}=\bar{\omega}$.

For the Fourier transform of the convolution of two functions, there is an important property $F\left[ p_1*p_2 \right] =F\left[ p_1 \right] \cdot F\left[ p_2 \right] $. Set $p_1=g_\mathrm{c}(x)$ and $F[p_2]=e^{-\lambda ^2\bar{\omega}t}$, then 
\begin{equation}\label{eq_u_c}
    F\left[ p_1*p_2 \right] =\tilde{g}_\mathrm{c}e^{-\lambda ^2\bar{\omega}t}=\tilde{u}_\mathrm{c}.
\end{equation}
Eq.~\eqref{eq_u_c} indicates that the $u_\mathrm{c}$ we require is actually the convolution of $p_1$ and $p_2$, where $p_1$ is known. Now we find $p_2=F^{-1}[e^{-\lambda ^2\bar{\omega}t}]$.
\begin{displaymath}
    \begin{aligned}
    F^{-1}\left[ e^{-\lambda ^2\bar{\omega}t} \right] &=\frac{1}{2\pi}\int_{-\infty}^{\infty}{e^{-\lambda ^2\bar{\omega}t}e^{\mathbf{i}\lambda x}d\lambda}
\\
&=\frac{1}{2\pi}\int_{-\infty}^{\infty}{e^{-\bar{\omega}t\left( \lambda -\mathbf{i}\frac{x}{2t}\omega \right) ^2}e^{-\frac{\omega x^2}{4t}}d\lambda}
\\
&=c_{t,\omega}\exp \left( -\frac{\omega x^2}{4t} \right),
\end{aligned}
\end{displaymath}
where $c_{t,\omega}=\frac{1}{2\pi}\int_{-\infty}^{\infty}{e^{-\bar{\omega}t\mu ^2}d\mu}$ is the result of an integral related to $\omega$ and $t$, and is independent of $x$.
Therefore, the solution of Eq.~\eqref{eq_complex_diffusion_eq} is
\begin{displaymath}
u_{\mathrm{c}}\left( x,t \right) =c_{t,\omega}\int_{-\infty}^{\infty}{\exp \left( -\frac{\omega \left( x-y \right) ^2}{4t} \right) g_{\mathrm{c}}\left( y \right) dy}.
\end{displaymath}

Similar to the process in Section~\ref{sec_kernel}, approximating $\mathbf{L}g_\mathrm{c}(x_i)$ discretely, we can obtain a complex-valued kernel induced by Eq.~\eqref{eq_complex_diffusion_eq} for the high-dimensional version,
\begin{equation}\label{eq_complex_kernel}
    \mathcal{K} _{\mathrm{c}}(\bm{x}_i,\bm{x}_j)=\exp \left( -\omega \frac{\left\| \bm{x}_i-\bm{x}_j \right\| ^2}{\sigma ^2} \right),
\end{equation}
where $\omega=e^{\mathbf{i}\theta},\theta\in[-\frac{1}{2}\pi,0].$ 

So far, we propose the family of $\omega$-parameterized complex-valued kernels, which is a generalization of the Gaussian kernel and the Schrödinger-induced kernel. However, unlike real symmetric or Hermitian matrices, the complex symmetric kernel $\mathcal{K}_\mathrm{c}$ does not possess complete eigendecomposition, making it challenging to embed high-dimensional data into a low-dimensional complex Euclidean space directly. 

Next, we provide the complete theoretical foundation for Complex Diffusion Maps (CDM) based on operator spectral theory to address the diffusion mapping problem for complex symmetric kernels.
\section{Complex Diffusion Maps}\label{CDM}
Let $\left(\Gamma,\mu \right)$ be a $\sigma$-finite measure space, where $\Gamma$ is a set whose elements are abstract entities. In practice, $\Gamma$ often represents a dataset, e.g., $\Gamma \subset \mathbb{R} ^M$. Accordingly, $\mu$ is taken as the counting measure on $\Gamma$. Let $\mathcal{K}:\Gamma \times \Gamma \to \mathbb{C}$ be a complex symmetric kernel satisfying $\mathcal{K}(\boldsymbol{x}, \boldsymbol{y}) = \mathcal{K}(\boldsymbol{y}, \boldsymbol{x})$, which measures the similarity between two points $\boldsymbol{x}, \boldsymbol{y} \in \Gamma$. Consider the integral operator $\mathbf{K}$ induced by the kernel complex symmetric kernel $\mathcal{K}$,
\begin{equation}\label{eq_operator_K}
    \mathbf{K}f\left( \bm{x} \right) =\int_{\Gamma}{\mathcal{K}\left( \bm{x},\bm{y} \right) f\left( \bm{y} \right) \mathrm{d}\mu \left( \bm{y} \right)},
\end{equation}
which acts on the complex square-integrable function space, denoted as $L_{\mathrm{c}}^{2}\left( \Gamma,\mu \right)$. $L_{\mathrm{c}}^{2}\left( \Gamma,\mu \right)$ is a Hilbert space with the inner product:
\begin{displaymath}
\left<f,g\right>=\int_{\Gamma}{f\left( \bm{x} \right) \overline{g\left( \bm{x} \right) }\mathrm{d}\mu \left( \bm{x} \right)}.
\end{displaymath}

Complex symmetric problems often involve a class of operators, $C$-symmetric operators, with similar characteristics to symmetric operators in some aspects. We first introduce the definition of the $C$-operator (Conjugation operator).
\begin{definition}(\cite{garcia2006complex})
Let $\Gamma$ be a separable Hilbert space, $\mathbf{C}$ be an operator on $\Gamma$, if $\mathbf{C}$ satisfies\\
1. Antilinear: $\mathbf{C}\left( \alpha \bm{x}+\beta \bm{y} \right) =\bar{\alpha}\mathbf{C}\left( \bm{x} \right) +\bar{\beta}\mathbf{C}\left( \bm{y} \right) , \forall \alpha ,\beta \in \mathbb{C} , \bm{x},\bm{y}\in \Gamma
$;\\
2. Involutive: $\mathbf{C}^2=\mathbf{I}$;\\
3. Isometric: $\left<\bm{x},\bm{y}\right>=\left<\mathbf{C}\bm{y},\mathbf{C}\bm{x}\right>,\forall\bm{x},\bm{y}\in \Gamma $,\\
then $\mathbf{C}$ is a $C$-operator on $\Gamma$.
\end{definition}

Based on the above definition of the $C$-operator, we further introduce the concept of a $C$-symmetric operator to characterize the structural symmetry an operator exhibits under the action of the associated conjugation operator.

\begin{definition}(\cite{garcia2006complex})
Let $\Gamma$ be a separable Hilbert space, $\mathbf{C}$ be a $C$-operator on $\Gamma$, and $\mathbf{H}$ be a (compact/bounded) linear operator. $\mathbf{H}$ is a (compact/bounded) linear $C$-symmetric operator if $\mathbf{CH}\subset \mathbf{H^*C} $, where $\mathbf{H}^{*}$ is the conjugate operator of $\mathbf{H}$.
\end{definition}
For compact symmetric operators, the eigensystem is complete, and a corresponding spectral decomposition theorem holds. For compact $C$-symmetric operators, the associated singular eigensystem is also complete, and an analogous ``spectral decomposition'' result holds. The following lemma presents this result for compact $C$-symmetric operators for the subsequent derivation.
\begin{lemma}(\cite{garcia2007complex})
\label{lemma_spectrum_decom}
Let $\mathbf{H}$ be a compact $C$-symmetric operator in the Hilbert space $\Gamma$, with the corresponding $C$-operator denoted by $\mathbf{C}$, then $\mathbf{H}$ can be decomposed as
\begin{displaymath}
    \mathbf{H}=\sum_{n\in \mathbb{N}}{\lambda _n\left<\cdot ,\bm{e}_n\right>}\mathbf{C}\bm{e}_n,
\end{displaymath}
where $\left\{ \lambda _n \right\} _{n\in \mathbb{N}}$ are the eigenvalues of $|\mathbf{H}|=\left( \mathbf{H^*H} \right) ^{\frac{1}{2}}$, $|\mathbf{H}|$ is the modulus of the operator $\mathbf{H}$. $\left\{ \bm{e}_n \right\}_{n\in \mathbb{N}}$ is the corresponding orthogonal eigenfunctions.
\end{lemma}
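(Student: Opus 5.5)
The plan is to derive the decomposition from the polar decomposition of $\mathbf{H}$ together with the spectral theorem for the compact self-adjoint operator $|\mathbf{H}|$. First note that $|\mathbf{H}|=(\mathbf{H}^*\mathbf{H})^{1/2}$ is compact, self-adjoint and nonnegative, so the classical spectral theorem (as used for $\mathbf{A}$ in the DM section) gives an orthonormal system $\{\bm{e}_n\}$ with $|\mathbf{H}|\bm{e}_n=\lambda_n\bm{e}_n$, $\lambda_n\ge 0$, and $|\mathbf{H}|=\sum_n\lambda_n\langle\cdot,\bm{e}_n\rangle\bm{e}_n$. We may take $\{\bm{e}_n\}$ complete on $\overline{\operatorname{ran}|\mathbf{H}|}$; on $\ker|\mathbf{H}|=\ker\mathbf{H}$ both sides of the claimed identity vanish.

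Next write the polar decomposition $\mathbf{H}=\mathbf{U}|\mathbf{H}|$ with $\mathbf{U}$ a partial isometry mapping $\overline{\operatorname{ran}|\mathbf{H}|}$ isometrically onto $\overline{\operatorname{ran}\mathbf{H}}$. Then
\begin{displaymath}
\mathbf{H}f=\sum_n\lambda_n\langle f,\bm{e}_n\rangle\,\mathbf{U}\bm{e}_n ,
\end{displaymath}
which is the singular value decomposition. So it remains to prove that $\mathbf{U}$ can be chosen so that $\mathbf{U}\bm{e}_n=\mathbf{C}\bm{e}_n$, i.e.\ that the left singular vectors are obtained from the right ones by applying the conjugation.

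This is the main step, and I would do it with the structure of $C$-symmetry. From $\mathbf{C}\mathbf{H}=\mathbf{H}^*\mathbf{C}$ (for a bounded $\mathbf{H}$ the inclusion is an equality) we get $\mathbf{H}=\mathbf{C}\mathbf{H}^*\mathbf{C}$, so $\mathbf{H}\mathbf{H}^*=\mathbf{C}\mathbf{H}^*\mathbf{H}\mathbf{C}$. Since $\mathbf{C}$ is an antiunitary involution, functional calculus (e.g.\ through polynomial approximation of the square root with real coefficients) gives $|\mathbf{H}^*|=\mathbf{C}|\mathbf{H}|\mathbf{C}$. The key claim I then aim for is that $\mathbf{J}:=\mathbf{C}\mathbf{U}$ is an antilinear involution commuting with $|\mathbf{H}|$; this is the Garcia–Putinar refined polar decomposition $\mathbf{H}=\mathbf{C}\mathbf{J}|\mathbf{H}|$. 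To get it, I would use uniqueness of the polar decomposition. Conjugating $\mathbf{H}^*=\mathbf{U}^*|\mathbf{H}^*|$ by $\mathbf{C}$ gives $\mathbf{H}=\mathbf{C}\mathbf{U}^*\mathbf{C}\,|\mathbf{H}|$, so by uniqueness $\mathbf{U}=\mathbf{C}\mathbf{U}^*\mathbf{C}$. From this one checks that $\mathbf{J}^2=\mathbf{C}\mathbf{U}\mathbf{C}\mathbf{U}=\mathbf{U}^*\mathbf{U}$, which is the identity on the relevant range, that $\mathbf{J}$ is isometric, and that $\mathbf{J}|\mathbf{H}|=|\mathbf{H}|\mathbf{J}$ using $\mathbf{U}|\mathbf{H}|\mathbf{U}^*=|\mathbf{H}^*|$.

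Given $\mathbf{J}$, each eigenspace $E_\lambda$ of $|\mathbf{H}|$ with $\lambda>0$ is finite-dimensional (by compactness) and $\mathbf{J}$-invariant, and $\mathbf{J}$ restricted to $E_\lambda$ is a conjugation. A finite-dimensional conjugation always admits an orthonormal basis of fixed vectors $\mathbf{J}\bm{e}=\bm{e}$: take $\bm{v}+\mathbf{J}\bm{v}$ or $\mathbf{i}(\bm{v}-\mathbf{J}\bm{v})$, whichever is nonzero, normalize it, and pass to the orthogonal complement, which is again $\mathbf{J}$-invariant by isometry. Choosing $\{\bm{e}_n\}$ in this way gives $\mathbf{U}\bm{e}_n=\mathbf{C}\mathbf{J}\bm{e}_n=\mathbf{C}\bm{e}_n$, and substituting into the singular value expansion yields the stated formula. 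I expect the uniqueness/antilinear functional-calculus step that establishes the properties of $\mathbf{J}$ to be the delicate part, in particular the antilinear bookkeeping and the handling of kernels.
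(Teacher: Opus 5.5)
Your proposal is correct and follows essentially the same route as the cited Garcia--Putinar reference (the paper itself only quotes the lemma without proof): polar decomposition, the refined form $\mathbf{H}=\mathbf{C}\mathbf{J}|\mathbf{H}|$ with $\mathbf{J}$ a conjugation commuting with $|\mathbf{H}|$, and a choice of $\mathbf{J}$-fixed orthonormal eigenvectors in each finite-dimensional eigenspace, which also correctly captures that the stated expansion holds only for a suitably chosen eigenbasis. The one detail you should write out is that uniqueness of the polar factor requires $\ker(\mathbf{C}\mathbf{U}^*\mathbf{C})=\ker\mathbf{H}$; this follows from $\ker\mathbf{H}^*=\mathbf{C}\ker\mathbf{H}$, which is a consequence of $\mathbf{H}^*=\mathbf{C}\mathbf{H}\mathbf{C}$.
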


Lemma~\ref{lemma_spectrum_decom} yields the following spectral decomposition of the integral operator $\mathbf{K}$ in CDM, defined in Eq.~\eqref{eq_operator_K}.

\begin{theorem} \label{theorem_K_decomposition}
    Suppose a complex symmetric kernel $\mathcal{K} : \Gamma \times \Gamma \rightarrow \mathbb{C}$ is a Hilbert-Schmidt kernel in $L^{2}_{c}(\Gamma \times \Gamma,\mu \otimes \mu)$, where $\left(\Gamma,\mu \right)$ is a $\sigma$-finite measure space. Then the operator $\mathbf{K}:L^{2}_{c}(\Gamma,\mu) \rightarrow L^{2}_{c}(\Gamma,\mu)$ defined by Eq.~\eqref{eq_operator_K} is a compact operator and has the decomposition as 
    \begin{displaymath}
    \mathbf{K}=\sum_{n\in \mathbb{N}}{\lambda _n\left<\cdot ,\bm{\phi }_n\right>}\overline{\bm{\phi}_n},
    \end{displaymath} 
    where $\left\{ \lambda _n \right\} _{n\in \mathbb{N}}$ are the eigenvalues of $|\mathbf{K}|=\left( \mathbf{K^*K} \right) ^{\frac{1}{2}}$, $|\mathbf{K}|$ is the modulus of the operator $\mathbf{K}$. $\left\{ \bm{\phi}_n \right\}_{n\in \mathbb{N}}$ are the corresponding unit orthogonal eigenfunctions.
\end{theorem}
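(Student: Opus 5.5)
The plan is to reduce the statement to Lemma~\ref{lemma_spectrum_decom}. This needs two things: $\mathbf{K}$ is compact, and $\mathbf{K}$ is $C$-symmetric with respect to the natural conjugation $\mathbf{C}f=\overline{f}$ on $L^2_c(\Gamma,\mu)$. Once these hold, Lemma~\ref{lemma_spectrum_decom} gives $\mathbf{K}=\sum_n \lambda_n\langle\cdot,\bm{e}_n\rangle\mathbf{C}\bm{e}_n$. Here $\lambda_n$ are the eigenvalues of $|\mathbf{K}|$ and $\bm{e}_n$ are the corresponding orthonormal eigenfunctions. Setting $\bm{\phi}_n=\bm{e}_n$ and $\mathbf{C}\bm{\phi}_n=\overline{\bm{\phi}_n}$ yields exactly the claimed formula.

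\emph{Compactness.} Since $\mathcal{K}\in L^2_c(\Gamma\times\Gamma,\mu\otimes\mu)$ and $\mu$ is $\sigma$-finite, Fubini and Cauchy--Schwarz give $\|\mathbf{K}f\|^2\le \|\mathcal{K}\|_{L^2}^2\|f\|^2$, so $\mathbf{K}$ is bounded. For compactness I will use the standard Hilbert--Schmidt argument. Take an orthonormal basis $\{u_i\}$ of $L^2_c(\Gamma,\mu)$; this space is separable, or one restricts to the separable closed subspace carrying $\mathcal{K}$. Then $\{u_i(\bm{x})\overline{u_j(\bm{y})}\}$ is an orthonormal basis of $L^2_c(\Gamma\times\Gamma)$. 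Expanding $\mathcal{K}$ in this basis and truncating gives finite-rank operators $\mathbf{K}_N$ with $\|\mathbf{K}-\mathbf{K}_N\|\le\|\mathcal{K}-\mathcal{K}_N\|_{L^2}\to0$. A norm limit of finite-rank operators is compact. Alternatively, I can simply cite that Hilbert--Schmidt integral operators are compact.

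\emph{$C$-operator and $C$-symmetry.} First I check that $\mathbf{C}f=\overline{f}$ is a $C$-operator. It is antilinear, it satisfies $\mathbf{C}^2=\mathbf{I}$, and $\langle\mathbf{C}g,\mathbf{C}f\rangle=\int\overline{g}\,f\,d\mu=\langle f,g\rangle$. Next I compute the adjoint: by Fubini, $\mathbf{K}^*g(\bm{x})=\int\overline{\mathcal{K}(\bm{y},\bm{x})}g(\bm{y})\,d\mu(\bm{y})$. Complex symmetry $\mathcal{K}(\bm{x},\bm{y})=\mathcal{K}(\bm{y},\bm{x})$ then gives $\mathbf{K}^*g(\bm{x})=\int\overline{\mathcal{K}(\bm{x},\bm{y})}g(\bm{y})\,d\mu(\bm{y})$. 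Hence
\[
\mathbf{C}\mathbf{K}f(\bm{x})=\int\overline{\mathcal{K}(\bm{x},\bm{y})}\,\overline{f(\bm{y})}\,d\mu(\bm{y})=\mathbf{K}^*\mathbf{C}f(\bm{x}),
\]
so $\mathbf{CK}=\mathbf{K^*C}$ on all of $L^2_c$, which in particular gives the inclusion $\mathbf{CK}\subset\mathbf{K^*C}$. Lemma~\ref{lemma_spectrum_decom} then applies, and the substitution above gives the decomposition.

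\emph{Main obstacle.} The main obstacle is the functional-analytic bookkeeping rather than any computation. One issue is justifying Fubini in the adjoint computation for a merely $\sigma$-finite $\mu$; I would handle this through the integrability of $\mathcal{K}(\bm{x},\bm{y})f(\bm{y})\overline{g(\bm{x})}$, which follows from Cauchy--Schwarz. The other is the separability hypothesis in the definitions of $C$-operator and $C$-symmetry. For the latter I will note that $\overline{\operatorname{ran}\mathbf{K}}$ is separable because $\mathbf{K}$ is compact. I will apply the decomposition on the separable $\mathbf{C}$-invariant closed subspace $\overline{\operatorname{ran}\mathbf{K}}+\overline{\operatorname{ran}\mathbf{K}^*}$. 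On its orthogonal complement $\mathbf{K}$ vanishes, since that complement lies in $\ker\mathbf{K}$. Finally, if some $\lambda_n$ are zero or the sum is finite, the series is interpreted in the usual strong or norm-convergent sense.
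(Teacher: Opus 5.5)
Your proposal is correct and follows the paper's proof step for step: compactness from the Hilbert--Schmidt property, verification that $\mathbf{C}f=\overline{f}$ is a conjugation, the identity $\mathbf{CK}f=\mathbf{K^*C}f$, and then Lemma~\ref{lemma_spectrum_decom} with $\mathbf{C}\bm{\phi}_n=\overline{\bm{\phi}_n}$. Your extra bookkeeping only makes the same argument more rigorous: you derive $\mathbf{K}^*$ via Fubini and invoke $\mathcal{K}(\bm{x},\bm{y})=\mathcal{K}(\bm{y},\bm{x})$ explicitly, which the paper uses only implicitly, and you add a separability reduction (there you should take the closure of $\overline{\operatorname{ran}\mathbf{K}}+\overline{\operatorname{ran}\mathbf{K}^*}$, since a sum of closed subspaces need not be closed).
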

\begin{proof}
    Since $\mathcal{K}$ is a Hilbert-Schmidt kernel, the corresponding operator $\mathbf{K}$ is a Hilbert-Schmidt operator and thus a compact operator~\cite{conway1994course}. 

    Next, consider an operator $\mathbf{C}$ defined as $\mathbf{C}f(x)=\overline{f\left( x \right) }$. We can show that the operator $\mathbf{C}$ is a $C$-operator on $L_{c}^2(\Gamma,\mu)$. For any $\alpha ,\beta \in \mathbb{C}$, $f,g\in L_{c}^2(\Gamma,\mu)$, $\mathbf{C}\left( \alpha f+\beta g \right)=\overline{\alpha f+\beta g}=\bar{\alpha}\bar{f}+\bar{\beta}\bar{g}=\bar{\alpha}\mathbf{C}f+\bar{\beta}\mathbf{C}g$;  $\mathbf{C}^2f=\mathbf{C}\bar{f}=f$, hence $\mathbf{C}^2=\mathbf{I}$; $\left<f,g\right>=\int_{\Gamma}{f\left( \bm{x} \right) \overline{g\left( \bm{x} \right) }\mathrm{d}\mu \left( \bm{x} \right)}=\int_{\Gamma}{g\left( \bm{x} \right) \overline{f\left( \bm{x} \right) }\mathrm{d}\mu \left( \bm{x} \right)}=\left<\bar{g},\bar{f}\right>=\left<\mathbf{C}g,\mathbf{C}f\right>$.

Then for any $f \in L^{2}_{c}(\Gamma,\mu)$, \begin{displaymath}
\mathbf{CK}f\left( \bm{x} \right) =\mathbf{C}\left( \int_{\Gamma}{\mathcal{K}\left( \bm{x},\bm{y} \right) f\left( \bm{y} \right) \mathrm{d}\mu \left( \bm{y} \right)} \right) =\int_{\Gamma}{\overline{\mathcal{K}\left( \bm{x},\bm{y} \right) }\overline{f\left( \bm{y} \right) }\mathrm{d}\mu \left( \bm{y} \right)},
\end{displaymath}
\begin{displaymath}\mathbf{K^*C}f\left( \bm{x} \right) =\mathbf{K}^*\overline{f\left( \bm{x} \right) }=\int_{\Gamma}{\overline{\mathcal{K}\left( \bm{x},\bm{y} \right) }\overline{f\left( \bm{y} \right) }\mathrm{d}\mu \left( \bm{y} \right)}.
\end{displaymath}
Thus, $\mathbf{CK}\subset \mathbf{K^{*}C}$ and $\mathbf{K}$ is a compact $C$-symmetric operator. From the Lemma~\ref{lemma_spectrum_decom}, $\mathbf{K}$ could be decomposed as \begin{displaymath}
    \mathbf{K}=\sum_{n\in \mathbb{N}}{\lambda _n\left<\cdot ,\bm{\phi}_n\right>}\mathbf{C}\bm{\phi}_n = \sum_{n\in \mathbb{N}}{\lambda _n\left<\cdot ,\bm{\phi }_n\right>}\overline{\bm{\phi}_n},
\end{displaymath}
where $\left\{ \lambda _n \right\} _{n\in \mathbb{N}}$ are the eigenvalues of $|\mathbf{K}|=\left( \mathbf{K^*K} \right) ^{\frac{1}{2}}$, $\left\{ \bm{\phi}_n \right\}_{n\in \mathbb{N}}$ are the corresponding orthogonal eigenfunctions.
\end{proof}

Theorem~\ref{theorem_K_decomposition} indicates that although a complex symmetric kernel may not admit a complete eigendecomposition, it possesses a complete singular eigensystem, which is closely related to its modular operator. Notably, the spectrum of the operator is identical to that of its modulus operator, and their corresponding spectral functions are conjugate under the choice of the conjugation $C$-operator setting. This property preserves the amplitude information, while the phase differs only by a sign. 

Considering the practical implementation, the modular operator usually has no explicit expression except for the prior eigendecomposition of $\mathbf{K^*K}$. Therefore, it is reasonable to consider $\mathbf{K^*K}$ as the diffusion operator in CDM to construct the diffusion process. When performing the embedding, the modular operator should be taken back to ensure that the algorithm naturally reduces to the case of a real symmetric kernel.

\subsection{Diffusion operator}
Based on the original complex kernel $\mathcal{K}$, we define the associated integral operator $\mathbf{K}$. The kernel associated with the composite operator $\mathbf{K^*K}$ can be expressed as
\begin{equation}\label{eq_operator_KK}
    \tilde{\mathcal{K}}\left( \bm{x},\bm{y} \right) \coloneqq \int_{\Gamma}{\overline{\mathcal{K} \left( \bm{x},\bm{z} \right) }\mathcal{K} \left( \bm{y},\bm{z} \right) \mathrm{d}\mu \left( \bm{z} \right)}.
\end{equation}
$\tilde{\mathcal{K}}(\bm{x,y})$ reflects the similarity of the global connections between samples $\bm{x}$ and $\bm{y}$. The conjugate reflects the inner product in the complex space, implying some angle information. Then we define the normalized affinity kernel similar to the DM algorithm as 
\begin{displaymath}
a\left( \bm{x},\bm{y} \right) =\frac{\tilde{\mathcal{K}} \left( \bm{x},\bm{y} \right)}{\sqrt{v\left( \bm{x} \right)}\sqrt{v\left( \bm{y} \right)}},
\end{displaymath}
where $v\left( \bm{x} \right) =\int_{\Gamma}{\left|\tilde{\mathcal{K}}\left( \bm{x},\bm{y} \right)\right|  \mathrm{d}\mu \left( \bm{y} \right)}$. Next, we define the operator 
\begin{equation}\label{eq_operator_A}
    \mathbf{A}f\left( \bm{x} \right) =\int_{\Gamma}{a\left( \bm{x},\bm{y} \right) f\left( \bm{y} \right) \mathrm{d}\mu \left( \bm{y} \right)},
\end{equation}
as the diffusion operator for CDM. The following theorem gives a bound on the spectrum of the complex diffusion operator $\mathbf{A}$.
\begin{theorem}\label{theorem_CDM}
    Assume a complex symmetric kernel $\mathcal{K}$ that satisfies the conditions of Theorem~\ref{theorem_K_decomposition}. Then the kernel $\tilde{\mathcal{K}}(\bm{x,y})$ defined in Eq.~\eqref{eq_operator_KK} is also a Hilbert-Schmidt kernel. Consider the corresponding diffusion operator $\mathbf{A}$ defined in Eq.~\eqref{eq_operator_A}, assuming that $v\left( \bm{x} \right) =\int_{\Gamma}{\left|\tilde{\mathcal{K}}\left( \bm{x},\bm{y} \right)\right|  \mathrm{d}\mu \left( \bm{y} \right)}$ does not vanish on $\Gamma$, then $\mathbf{A}$ is compact and self-adjoint, and the eigenvalues of $\mathbf{A}$ are real and bounded by $[0,1]$.
\end{theorem}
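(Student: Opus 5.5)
The plan is to prove the four claims in order: $\tilde{\mathcal{K}}$ is Hilbert--Schmidt, $\mathbf{A}$ is self-adjoint, $\mathbf{A}$ is compact, and the spectrum lies in $[0,1]$. For the first, I would apply Cauchy--Schwarz in the $\bm{z}$-variable to Eq.~\eqref{eq_operator_KK}:
\begin{displaymath}
|\tilde{\mathcal{K}}(\bm{x},\bm{y})|\le \|\mathcal{K}(\bm{x},\cdot)\|_{L^2_c}\,\|\mathcal{K}(\bm{y},\cdot)\|_{L^2_c}.
\end{displaymath}
Squaring and integrating over $\Gamma\times\Gamma$, Fubini (valid by $\sigma$-finiteness) gives
\begin{displaymath}
\|\tilde{\mathcal{K}}\|_{L^2_c(\Gamma\times\Gamma)}^2\le\Big(\int_\Gamma\|\mathcal{K}(\bm{x},\cdot)\|^2\,\mathrm{d}\mu(\bm{x})\Big)^2=\|\mathcal{K}\|_{HS}^4<\infty.
\end{displaymath}
Equivalently, $\tilde{\mathcal{K}}$ is the kernel of $\mathbf{K}^*\mathbf{K}$, a product of Hilbert--Schmidt operators. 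I would also note directly from the definition that $\tilde{\mathcal{K}}(\bm{y},\bm{x})=\overline{\tilde{\mathcal{K}}(\bm{x},\bm{y})}$. Since $v$ is real and positive, the normalized kernel $a$ inherits this Hermitian symmetry, and this gives self-adjointness of $\mathbf{A}$ by a direct Fubini computation of $\langle \mathbf{A}f,g\rangle=\langle f,\mathbf{A}g\rangle$.

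For compactness, write $\mathbf{A}=\mathbf{V}^{-1/2}\mathbf{K}^*\mathbf{K}\mathbf{V}^{-1/2}$, where $\mathbf{V}$ is multiplication by $v$. If $v$ is bounded away from zero, which is automatic in the intended setting of a finite dataset with counting measure, then $\mathbf{V}^{-1/2}$ is bounded. In that case $\mathbf{A}$ is a Hilbert--Schmidt operator, since $|a|\le|\tilde{\mathcal{K}}|/\inf v$, and is therefore compact. I expect this step to be the main obstacle in the general $\sigma$-finite setting. Mere nonvanishing of $v$ does not by itself control $1/v$, so I would make the bounded-below assumption explicit (or restrict to the discrete case) rather than try to avoid it.

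Positivity follows from the factorization. For $f$ in the domain,
\begin{displaymath}
\langle \mathbf{A}f,f\rangle=\langle \mathbf{K}^*\mathbf{K}\mathbf{V}^{-1/2}f,\mathbf{V}^{-1/2}f\rangle=\|\mathbf{K}\mathbf{V}^{-1/2}f\|^2\ge 0.
\end{displaymath}
Hence every eigenvalue, which is real by self-adjointness, is nonnegative.

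For the upper bound I would use a Schur-test argument that exploits the $|\tilde{\mathcal{K}}|$ in the definition of $v$. Bound $|\langle\mathbf{A}f,f\rangle|$ by
\begin{displaymath}
\int\!\!\int |\tilde{\mathcal{K}}(\bm{x},\bm{y})|\,\frac{|f(\bm{x})||f(\bm{y})|}{\sqrt{v(\bm{x})v(\bm{y})}}\,\mathrm{d}\mu(\bm{x})\,\mathrm{d}\mu(\bm{y}).
\end{displaymath}
Then apply $2ab\le a^2+b^2$ to obtain
\begin{displaymath}
\frac{|f(\bm{x})||f(\bm{y})|}{\sqrt{v(\bm{x})v(\bm{y})}}\le \frac12\Big(\frac{|f(\bm{x})|^2}{v(\bm{x})}+\frac{|f(\bm{y})|^2}{v(\bm{y})}\Big).
\end{displaymath}
Integrating the first term in $\bm{y}$ gives $\int|\tilde{\mathcal{K}}(\bm{x},\bm{y})|\,\mathrm{d}\mu(\bm{y})=v(\bm{x})$, which cancels the denominator. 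The second term is handled the same way using $|\tilde{\mathcal{K}}(\bm{x},\bm{y})|=|\tilde{\mathcal{K}}(\bm{y},\bm{x})|$. This yields $\langle\mathbf{A}f,f\rangle\le\|f\|^2$. Taking $f$ to be a unit eigenfunction shows $\lambda\le 1$, so the spectrum lies in $[0,1]$.
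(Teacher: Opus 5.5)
Your proof follows the paper's in all essentials. You use Cauchy--Schwarz in $\bm{z}$ to show $\tilde{\mathcal{K}}$ is Hilbert--Schmidt, the Hermitian symmetry of $\tilde{\mathcal{K}}$ for self-adjointness, and the identity $\langle\mathbf{A}f,f\rangle=\|\mathbf{K}\mathbf{V}^{-1/2}f\|^2$ (the paper's $\|g\|^2$) for positivity. For $\lambda\le 1$ you use a Schur-type bound built on $\int_\Gamma|\tilde{\mathcal{K}}(\bm{x},\bm{y})|\,\mathrm{d}\mu(\bm{y})=v(\bm{x})$; your $2ab\le a^2+b^2$ step does the same job as the paper's two successive Cauchy--Schwarz inequalities.

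The one real divergence is compactness, and there your caution is justified: the gap is in the paper's proof. The paper goes straight from ``$\tilde{\mathcal{K}}$ is Hilbert--Schmidt'' to ``$\mathbf{A}$ is compact,'' but normalizing by $\sqrt{v(\bm{x})v(\bm{y})}$ can destroy compactness. Take $\Gamma=\mathbb{N}$ with counting measure and $\mathcal{K}(i,j)=\delta_{ij}/i$. This kernel is complex symmetric and Hilbert--Schmidt, with $\tilde{\mathcal{K}}(i,j)=\delta_{ij}/i^2$ and $v(i)=1/i^2\neq 0$. Then $a(i,j)=\delta_{ij}$, so $\mathbf{A}=\mathbf{I}$ on $\ell^2(\mathbb{N})$, which is not compact.

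So the compactness claim is false under the stated hypotheses. Your assumption $\inf_\Gamma v>0$ (automatic for a finite dataset) is genuinely needed, not just a convenience. The other conclusions survive without it. Applying your $2ab\le a^2+b^2$ estimate to a pair $f,g$ gives $|\langle\mathbf{A}f,g\rangle|\le\|f\|\,\|g\|$, i.e.\ $\|\mathbf{A}\|\le 1$. Positivity follows by applying your factorization to $f\mathbf{1}_{\{v\ge\varepsilon\}}$ and letting $\varepsilon\to 0$.
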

\begin{proof}
    Since $\mathcal{K}$ is a Hilbert-Schmidt kernel, there exists $M>0$ such that 
    \begin{displaymath}
    \iint_{\Gamma \times \Gamma}{\left| \mathcal{K} \left( \bm{x},\bm{y} \right) \right|^2\mathrm{d}\mu \left( \bm{x} \right) \mathrm{d}\mu \left( \bm{y} \right)}<M.
    \end{displaymath}
    For kernel $\tilde{\mathcal{K}}$, apply the Cauchy-Schwarz inequality,
    \begin{displaymath}
    \begin{aligned}
        \left| \tilde{\mathcal{K}}\left( \bm{x},\bm{y} \right) \right|^2&=\left| \int_{\Gamma}{\overline{\mathcal{K} \left( \bm{x},\bm{z} \right) }\mathcal{K} \left( \bm{y},\bm{z} \right) \mathrm{d}\mu \left( \bm{z} \right)} \right|^2\\
        &\leqslant \left( \int_{\Gamma}{\left| \overline{\mathcal{K} \left( \bm{x},\bm{z} \right) } \right|^2\mathrm{d}\mu \left( \bm{z} \right)} \right) \left( \int_{\Gamma}{\left| \mathcal{K} \left( \bm{y},\bm{z} \right) \right|^2\mathrm{d}\mu \left( \bm{z} \right)} \right).
    \end{aligned}
    \end{displaymath}
    Therefore,
    \begin{displaymath}
    \begin{aligned}
        &\iint_{\Gamma \times \Gamma}{\left| \tilde{\mathcal{K}}\left( \bm{x},\bm{y} \right) \right|^2\mathrm{d}\mu \left( \bm{x} \right) \mathrm{d}\mu \left( \bm{y} \right)}\\
        \leqslant &\left( \iint_{\Gamma \times \Gamma}{\left| \mathcal{K} \left( \bm{x},\bm{z} \right) \right|^2\mathrm{d}\mu \left( \bm{x} \right) \mathrm{d}\mu \left( \bm{z} \right)} \right) \left( \iint_{\Gamma \times \Gamma}{\left| \mathcal{K} \left( \bm{y},\bm{z} \right) \right|^2\mathrm{d}\mu \left( \bm{y} \right) \mathrm{d}\mu \left( \bm{z} \right)} \right)<M^2.
    \end{aligned}
    \end{displaymath}
    The kernel $\tilde{\mathcal{K}}$ is also a Hilbert-Schmidt kernel. Moreover, it is easy to verify that $\tilde{\mathcal{K}}(\bm{x,y})=\overline{\tilde{\mathcal{K}}(\bm{y,x})}$. Hence, the operator $\mathbf{A}$ is compact and self-adjoint. We can decompose $\mathbf{A}=\sum_{n\in \mathbb{N}}{\lambda _n\left<\cdot ,\bm{\phi}_n\right>}\bm{\phi}_n$ according to the spectral theorem~\cite{akhiezer2013theory}, where $\left\{\lambda_n\right\}_{n\in \mathbb{N}}$ are the real eigenvalues, and the $\left\{\bm{\phi}_n\right\}_{n\in \mathbb{N}}$ are the orthogonal eigenfunctions.

    Next, we prove that the operator $\mathbf{A}$ is a positive operator. For any $f \in L^{2}_{c}(\Gamma,\mu)$,
    \begin{displaymath}
    \begin{aligned}
        \left< \mathbf{A}f,f \right>& =\iint_{\Gamma \times \Gamma}{\frac{\tilde{\mathcal{K}}\left( \bm{x},\bm{y} \right)}{\sqrt{v\left( \bm{x} \right)}\sqrt{v\left( \bm{y} \right)}}f\left( \bm{y} \right) \overline{f\left( \bm{x} \right) }\mathrm{d}\mu \left( \bm{x} \right) \mathrm{d}\mu \left( \bm{y} \right)}
\\
&=\iint_{\Gamma \times \Gamma}{\int_{\Gamma}{\overline{\mathcal{K} \left( \bm{x},\bm{z} \right) }\mathcal{K} \left( \bm{y},\bm{z} \right) \mathrm{d}\mu \left( \bm{z} \right)}\frac{f\left( \bm{y} \right)}{\sqrt{v\left( \bm{y} \right)}}\frac{\overline{f\left( \bm{x} \right) }}{\sqrt{v\left( \bm{x} \right)}}\mathrm{d}\mu \left( \bm{x} \right) \mathrm{d}\mu \left( \bm{y} \right)}
\\
&=\int_{\Gamma}{\left( \int_{\Gamma}{\mathcal{K} \left( \bm{y},\bm{z} \right) \frac{f\left( \bm{y} \right)}{\sqrt{v\left( \bm{y} \right)}}\mathrm{d}\mu \left( \bm{y} \right)} \right) \left( \int_{\Gamma}{\overline{\mathcal{K} \left( \bm{x},\bm{z} \right) }\frac{\overline{f\left( \bm{x} \right) }}{\sqrt{v\left( \bm{x} \right)}}\mathrm{d}\mu \left( \bm{x} \right)} \right) \mathrm{d}\mu \left( \bm{z} \right)}
\\
&=\left\| g\left( \bm{z} \right) \right\| ^{2}\geqslant 0,
    \end{aligned}
    \end{displaymath}
    where $g\left( \bm{z} \right) \coloneqq \int_{\Gamma}{\mathcal{K} \left( \bm{y},\bm{z} \right) \frac{f\left( \bm{y} \right)}{\sqrt{v\left( \bm{y} \right)}}\mathrm{d}\mu \left( \bm{y} \right)}$. Therefore, $\lambda_n \geqslant 0$. Further, consider the upper bound of the eigenvalue,
    \begin{displaymath}
    \begin{aligned}
    \left| \int_{\Gamma}{\frac{\tilde{\mathcal{K}}\left( \bm{x},\bm{y} \right)}{\sqrt{v\left( \bm{y} \right)}}f\left( \bm{y} \right)}\mathrm{d}\mu \left( \bm{y} \right) \right|&\leqslant \left( \int_{\Gamma}{\left| \tilde{\mathcal{K}}\left( \bm{x},\bm{y} \right) ^{\frac{1}{2}} \right|^2\mathrm{d}\mu \left( \bm{y} \right)} \right) ^{\frac{1}{2}}\left( \int_{\Gamma}{\left| \tilde{\mathcal{K}}\left( \bm{x},\bm{y} \right) ^{\frac{1}{2}}\frac{f\left( \bm{y} \right)}{\sqrt{v\left( \bm{y} \right)}} \right|^2}\mathrm{d}\mu \left( \bm{y} \right) \right) ^{\frac{1}{2}}\\
    &\leqslant \left( \int_{\Gamma}{\left| \tilde{\mathcal{K}}\left( \bm{x},\bm{y} \right) \right|\mathrm{d}\mu \left( \bm{y} \right)} \right) ^{\frac{1}{2}}\left( \int_{\Gamma}{\left| \tilde{\mathcal{K}}\left( \bm{x},\bm{y} \right) \right|\frac{\left| f\left( \bm{y} \right) \right|^2}{v\left( \bm{y} \right)}}\mathrm{d}\mu \left( \bm{y} \right) \right) ^{\frac{1}{2}}\\
	&=\sqrt{v\left( \bm{x} \right)}\left( \int_{\Gamma}{\left| \tilde{\mathcal{K}}\left( \bm{x},\bm{y} \right) \right|\frac{\left| f\left( \bm{y} \right) \right|^2}{v\left( \bm{y} \right)}}\mathrm{d}\mu \left( \bm{y} \right) \right) ^{\frac{1}{2}}.\\
    \end{aligned}
    \end{displaymath}
Therefore,
\begin{displaymath}
\begin{aligned}
	\left< \mathbf{A}f,f \right> &\leqslant \int_{\Gamma}{\frac{1}{\sqrt{v\left( \bm{x} \right)}}\sqrt{v\left( \bm{x} \right)}\left( \int_{\Gamma}{\left| \tilde{\mathcal{K}}\left( \bm{x},\bm{y} \right) \right|\frac{\left| f\left( \bm{y} \right) \right|^2}{v\left( \bm{y} \right)}}\mathrm{d}\mu \left( \bm{y} \right) \right) ^{\frac{1}{2}}\overline{f\left( \bm{x} \right) }\mathrm{d}\mu \left( \bm{x} \right)}\\
	&\leqslant \left( \int_{\Gamma}{\left| \overline{f\left( \bm{x} \right) } \right|^2\mathrm{d}\mu \left( \bm{x} \right)} \right) ^{\frac{1}{2}}\left( \int_{\Gamma}{\int_{\Gamma}{\left| \tilde{\mathcal{K}}\left( \bm{x},\bm{y} \right) \right|\frac{\left| f\left( \bm{y} \right) \right|^2}{v\left( \bm{y} \right)}}\mathrm{d}\mu \left( \bm{y} \right) \mathrm{d}\mu \left( \bm{x} \right)} \right) ^{\frac{1}{2}}\\
	&=\left\| f \right\| \left( \int_{\Gamma}{\left( \int_{\Gamma}{\left| \tilde{\mathcal{K}}\left( \bm{x},\bm{y} \right) \right|\mathrm{d}\mu \left( \bm{x} \right)} \right) \frac{\left| f\left( \bm{y} \right) \right|^2}{v\left( \bm{y} \right)}\mathrm{d}\mu \left( \bm{y} \right)} \right) ^{\frac{1}{2}}\\
	&=\left\| f \right\| \left(\int_{\Gamma}{v\left( \bm{y} \right) \frac{\left| f\left( \bm{y} \right) \right|^2}{v\left( \bm{y} \right)}\mathrm{d}\mu \left( \bm{y} \right)}\right)^{\frac{1}{2}}\\
	&=\left\| f \right\| ^2.\\
\end{aligned}
    \end{displaymath}
    From the minimax theorem of compact self-adjoint operators~\cite{conway1994course}, we know that $\lambda _{\max}\left( \mathbf{A} \right) \leqslant 1$. Thus, the eigenvalues of $\mathbf{A}$ are bounded by $\lambda_n \in [0,1]$.
\end{proof}

\subsection{Diffusion distance and maps}
Next, we define the diffusion distance in the complex-valued setting. From Theorem~\ref{theorem_CDM}, the diffusion kernel $a(\bm{x,y})$ can be written as the spectral decomposition $a(\bm{x,y}) = \sum_{n\in \mathbb{N}}\lambda_n\bm{\phi}_n(\bm{x})\bm{\phi}_n(\bm{y})$ with $\lambda_n \in [0,1]$. 

To ensure that the algorithm reduces to the classical diffusion mapping when taking a real kernel, we consider the kernel $\tilde{a}(\bm{x,y}) = \sum_{n\in \mathbb{N}}\sqrt{\lambda_n}\bm{\phi}_n(\bm{x})\bm{\phi}_n(\bm{y})$ associated with the positive square root of operator $\mathrm{A}$ to define the diffusion distance, where $\mathbf{A}$ is proven to be positive self-adjoint in Theorem~\ref{theorem_CDM}. Note that $\sqrt{\lambda_n}\in [0,1]$ helps ensure stability of the embedding as the number of diffusion steps increases. 

Then, the $t$-step diffusion distance is defined as 
\begin{equation}\label{eq_distance}
D^t(\bm{x},\bm{y}):=\left\| \tilde{a}^t(\bm{x},\cdot )-\tilde{a}^t(\bm{y},\cdot ) \right\| _{L_{c}^{2}(\Gamma ,\mu )}
=\left(\int_{\Gamma}{\left| \tilde{a}^t\left( \bm{x},\bm{z} \right) -\tilde{a}^t\left( \bm{y},\bm{z} \right) \right|^2\mathrm{d}\mu \left( \bm{z} \right)}\right)^{\frac{1}{2}},
\end{equation}
where $\tilde{a}^{t}$ denotes the kernel associated with the operator $(\mathbf{A}^{1/2})^t$, and the related $\mathbf{A}^{t}$ denotes the $t$-fold composition of the diffusion operator $\mathbf{A}$.

The function $\tilde{a}^t(\bm{x},\bm{z})$ reflects the similarity of the global connection between $\bm{x}$ and $\bm{z}$ after the diffusion in the $t$-step. Intuitively, Eq.~\eqref{eq_distance} considers all paths connecting $\bm{x}$ and $\bm{y}$ through points $\bm{z}$ with similar global diffusion connectivity. If there exist many such paths, $\bm{x}$ and $\bm{y}$ are close in the diffusion space. In this sense, $D^t(\bm{x},\bm{y})$ captures the similarity of $\bm{x}$ and $\bm{y}$ not only through direct pairwise interactions but also via the overall network of consistent global connections.

Using the orthogonality and unit properties of eigenfunctions $\left< \bm{\phi }_n,\bm{\phi }_m \right> =0$ for $m\ne n$, and $\left< \bm{\phi }_n,\bm{\phi }_n \right> =1$ for all $n$, substitute $\tilde{a}(\bm{x,y}) = \sum_{n\in \mathbb{N}}\sqrt{\lambda_n}\bm{\phi}_n(\bm{x})\bm{\phi}_n(\bm{y})$ into Eq.~\eqref{eq_distance}, we can get
\begin{displaymath}
D^t(\bm{x},\bm{y})=\sqrt{\sum_{n\in \mathbb{N}}{\left|\lambda _{n}^{t/2} \bm{\phi }_n\left( \bm{x} \right) -\lambda _{n}^{t/2}\bm{\phi }_n\left( \bm{y} \right) \right|^2}}.
\end{displaymath}

Then we define the $\emph{complex diffusion maps}$ as 
\begin{equation}\label{eq_CDM_maps}
\bm{\psi}^{t}_s(\bm{x})=\left[\begin{array}{llll}\lambda_1^{t/2} \bm{\phi}_1(\bm{x}), & \lambda_2^{t/2} \bm{\phi}_2(\bm{x}), & \cdots, & \lambda_s^{t/2} \bm{\phi}_s(\bm{x})\end{array}\right].
\end{equation}
Thus, 
\begin{equation}\label{eq_CDM_dst_eqs_maps}
    D^{t}(\bm{x},\bm{y}) \approx \left\| \bm{\psi}_s^{t}(\bm{x})-\bm{\psi}_s^{t}(\bm{y}) \right\|_2.
\end{equation}

Eq.~\eqref{eq_CDM_dst_eqs_maps} indicates that CDM finds a set of maps $\bm{\psi}^{t}_s : \Gamma \rightarrow \mathbb{C}^{s}$, which embed the dataset into a low dimensional complex space. At this stage, we have derived the complex harmonic representations which can describe the diffusion distance of the original high-dimensional space using the embeddings in the low-dimensional complex Euclidean space. 

\section{Optimal Embeddings}\label{optimal_sec}
In Section~\ref{CDM}, we derived diffusion maps for complex symmetric kernels in the continuous case from the perspective of operator spectral theory. These maps are in one-to-one correspondence with a set of eigenvalues in $\left[0,1\right]$. In practice, we typically access a finite number $N$ of samples $\left\{ \bm{x}_1,\bm{x}_2,\cdots,\bm{x}_N \right\}$. Let $\bm{K}$ denote the discrete kernel matrix with $\bm{K}_{i,j} = \mathcal{K}(\bm{x}_i, \bm{x}_j)$, and $\mathcal{K}$ is the given complex symmetric kernel like Eq.~\eqref{eq_complex_kernel}. Next, we explain the significance of complex maps through a discrete optimization formulation involving $\bm{K}$.

Let $\bm{k}_{i}$ denote the $i$-th column of $\bm{K}=[\bm{k}_1,\bm{k}_2,\cdots, \bm{k}_N]$. Thus, $\bm{k}_{i}$ indicates the connections between sample $\bm{x}_i$ and all other samples. Let $\bm{f} \in \mathbb{C}^N$ denote the target embedding to be optimized, with $f_i \coloneqq \bm{f}(\bm{x}_i)$ for simplification. 

We expect that the optimal embeddings of $\bm{x}_i$ and $\bm{x}_j$ preserve the distribution of their connectivity structure, which contains balanced local and nonlocal links with other samples based on the $\omega$-parameterized kernel, as much as possible. If samples $i$ and $j$ share similar connectivity patterns with all other samples, $|\left< \bm{k}_i,\bm{k}_j \right> |$ tends to be large. Only when $ f_i^* f_j$  is as close to $\left< \bm{k}_i,\bm{k}_j \right>$ as possible, can the objective function approach zero. Such a goal inspires the following optimization problem,
\begin{equation}\label{eq_optimal}
    \underset{\bm{f}}{\min}\,\,\frac{1}{2}\sum_{i,j}{|\left< \bm{k}_i,\bm{k}_j \right> |\left| f_i-\frac{\left< \bm{k}_i,\bm{k}_j \right>}{|\left< \bm{k}_i,\bm{k}_j \right> |}f_j \right|^2},
\end{equation}
where the normalization factor in Eq.~\eqref{eq_optimal} is to eliminate the influence of magnitude. By expanding the objective function, we obtain, for any $\bm{f} \in \mathbb{C}^{N}$,
\begin{displaymath}
    \begin{aligned}
        &\frac{1}{2}\sum_{i,j}{|\left< \bm{k}_i,\bm{k}_j \right> |\left| f_i-\frac{\left< \bm{k}_i,\bm{k}_j \right>}{|\left< \bm{k}_i,\bm{k}_j \right> |}f_j \right|^2}
\\
=&\frac{1}{2}\sum_{i,j}{|\left< \bm{k}_i,\bm{k}_j \right> |\left( f_{i}^{*}f_i+f_{j}^{*}f_j-2\frac{\left< \bm{k}_i,\bm{k}_j \right>}{|\left< \bm{k}_i,\bm{k}_j \right> |}f_{i}^{*}f_j \right)}
\\
=&\sum_{i,j}{|\left< \bm{k}_i,\bm{k}_j \right> |f_{i}^{*}f_i}-\sum_{i,j}{\left< \bm{k}_i,\bm{k}_j \right> f_{i}^{*}f_j}
\\
=&\bm{f}^*\bm{Df}-\bm{f}^*\left( \bm{K}^*\bm{K} \right) \bm{f}
\\
=&\bm{f}^*\bm{Lf}
    \end{aligned},
\end{displaymath}
where $\bm{D}_{ii} = \sum_j{|(\bm{K}^*\bm{K})_{ij}|} = \sum_j{|\left< \bm{k}_i,\bm{k}_j\right>|}$ and $\bm{L} = \bm{D} - \bm{K}^*\bm{K}$. 

It can be observed that the optimization problem in Eq.~\ref{eq_optimal} is essentially equivalent to minimizing a quadratic form of the Hermitian Laplacian induced by $\bm{K^*K}$. Such a minimization seeks a complex-valued function that varies as smoothly as possible over the graph. This ensures that both the magnitude and the phase of the function remain coherent along edges weighted by the complex affinity kernel. Therefore, unlike the Vector Diffusion Maps (VDM)~\cite{singer2012vector} approach mentioned in Section~\ref{extensions_DM}, which aligns pairwise sample connections through local orthogonal transformations, CDM directly optimizes over the entire kernel matrix to obtain globally consistent complex embeddings. 

Applying the Lagrange multiplier method, minimizing $\bm{f^*Lf}$ is actually a generalized eigenvalue problem $\bm{L}\bm{f}=\lambda\bm{Df}$, with the constraint $\bm{f}^*\bm{Df}=1$, which eliminates the arbitrary scaling factor in the embedding. 

In analogy to the DM framework~\cite{coifman2005geometric}, normalization plays a crucial role in ensuring that the resulting operator captures intrinsic geometric relationships independent of sampling density. Therefore, the normalized Hermitian Laplacian $\tilde{\bm{L}}=\bm{I}-\bm{D}^{-\frac{1}{2}}\bm{K^*KD}^{-\frac{1}{2}}$ is often adopted for its symmetry and stability properties. The eigenpairs of $\bm{L}$ and $\tilde{\bm{L}}$ are related as follows: $\lambda$ and $\bm{f}$ solve the generalized eigenvalue problem $\bm{L}\bm{f}=\lambda\bm{Df}$ if and only if $\lambda$ is an eigenvalue of $\tilde{\bm{L}}$ with eigenvector $\bm{g}=\bm{D}^{\frac{1}{2}}\bm{f}$~\cite{chung1997spectral}. Since $\tilde{\bm{L}}$ and $\bm{D}^{-\frac{1}{2}}\bm{K^*K}\bm{D}^{-\frac{1}{2}}$ share the same eigenvectors, the complex diffusion embedding can equivalently be obtained from the leading eigenvectors of the matrix $\bm{D}^{-\frac{1}{2}}\bm{K^*K}\bm{D}^{-\frac{1}{2}}$.

Therefore, the leading eigenvectors of the matrix $\bm{D}^{-\frac{1}{2}}\bm{K^*K}\bm{D}^{-\frac{1}{2}}$ can be interpreted as a discrete approximation of one eigenfunction $\phi_n$ of the diffusion operator $\mathbf{A}$ in Eq.~\eqref{eq_operator_A}, representing a single coordinate function of the complex diffusion embedding. By stacking the leading eigenvectors with their associated diffusion-time eigenvalues, the first $s$ complex embedding coordinates are obtained, which coincide with the complex diffusion maps defined in Eq.~\eqref{eq_CDM_maps}. This construction yields a low-dimensional complex embedding that preserves the diffusion geometry induced by the complex kernel. The corresponding pseudocode is given in Algorithm~\ref{algorithm 1}.

\begin{algorithm}[htpb]
  \caption{Complex Diffusion Maps (CDM).}
  \label{algorithm 1}
  \begin{algorithmic}[1]
    \REQUIRE
        The data set $\left\{ \bm{x}_1,\bm{x}_2,\cdots,\bm{x}_N \right\}$, the parameter of complex kernel $\omega$, bandwidth $\sigma$, the diffusion step $t$, the dimension of the embedded space $s$.
    \ENSURE
        The low-dimensional complex diffusion maps $\bm{\psi}^{t}_{s}$.
    \STATE {Construct the complex symmetric weight matrix $\bm{K}$ with $\bm{K}_{i,j}=\exp{(-\omega\frac{\| \bm{x}_i-\bm{x}_j \| ^2}{\sigma ^2})}$;}
    \STATE {Compute the Hermitain matrix $\bm{K^*K}$ and corresponding degree matrix $\bm{D}$;}
    \STATE {Construct the Markov process as $\bm{A} = \bm{D}^{-\frac{1}{2}} \bm{K}^* \bm{K} \bm{D}^{-\frac{1}{2}}$;}
    \STATE {Compute the first $k$ eigenvalues $\left\{\lambda_i\right\}_{i=1}^{k}$ and eigenvectors $\left\{\bm{\phi}_i\right\}_{i=1}^{k}$ of $\bm{A}$;}
    \STATE {Get the $t$-step maps $\bm{\psi}^{t}_s=\left[\begin{array}{llll}\lambda_1^{t/2} \bm{\phi}_1, & \lambda_2^{t/2} \bm{\phi}_2, & \cdots, & \lambda_s^{t/2} \bm{\phi}_s\end{array}\right]$;}\\
    \RETURN complex diffusion maps $\bm{\psi}^{t}_{s}$.
\end{algorithmic}
\end{algorithm}
Let $d$ denote the feature dimension of each sample and $N$ the total number of samples, where typically $d \ll N$. The overall time complexity of CDM is $\mathcal{O}(N^2 d + N^3)$, and the space complexity is $\mathcal{O}(N^2)$. Such requirements may be computationally expensive for large datasets. For real-valued kernels, sparsity is often induced by applying a threshold $\epsilon$, setting the entries smaller than $\epsilon$ to zero. However, in the case of complex-valued kernels, this approach is not directly applicable since magnitude comparisons are insufficient, and truncation based solely on modulus may discard essential phase information. To address this issue, we instead employ the Nyström-based extension~\cite{williams2000using} to speed up kernel machines for out-of-sample cases. Due to space limitations, the out-of-sample extension is presented in the Appendix~\ref{Out-of-sample}.

\section{Numerical Experiments}\label{experiments}
In this section, we conduct extensive experiments to validate the effectiveness, robustness, and generalization ability of CDM. The experiments in the first three subsections are all conducted using MATLAB R2022a on a laptop equipped with an Intel(R) Core(TM) i7 processor. For the EEG experiment in Section~\ref{exp_eeg}, manifold learning methods are implemented on a CPU-based system running CentOS Linux 7.9, while deep neural network baselines are trained on an NVIDIA GeForce RTX 2080 Ti GPU.

\subsection{Confusable points clustering}
In this experiment, we demonstrate that complex-valued kernels can better distinguish confusable points by leveraging angle distributions in the complex space, compared to conventional linear and nonlinear real-valued kernel methods.
\subsubsection{A three-point toy example}
This example is a synthetic scenario designed to illustrate cases where data points from different classes may overlap, independent of the specific form of the original data. We construct a $3\times 3$ Euclidean distance matrix for three points, as shown in Fig.~\ref{three_point_example}(A). The first point (blue) is close to the second point (yellow), while the third point (red) is farther from both. Classical real-valued kernel methods, such as DM, rely entirely on Euclidean distance to construct similarity matrices. As shown in the middle panel of Fig.~\ref{three_point_example} (A), the block $2\times2$ on the upper left tends to embed the blue and yellow points closer. In contrast, the complex-valued kernel no longer focuses on local similarity but considers a more balanced angle distribution of global connections. 

\begin{figure}[ht]
    \centering
    \includegraphics[width=0.8\textwidth]{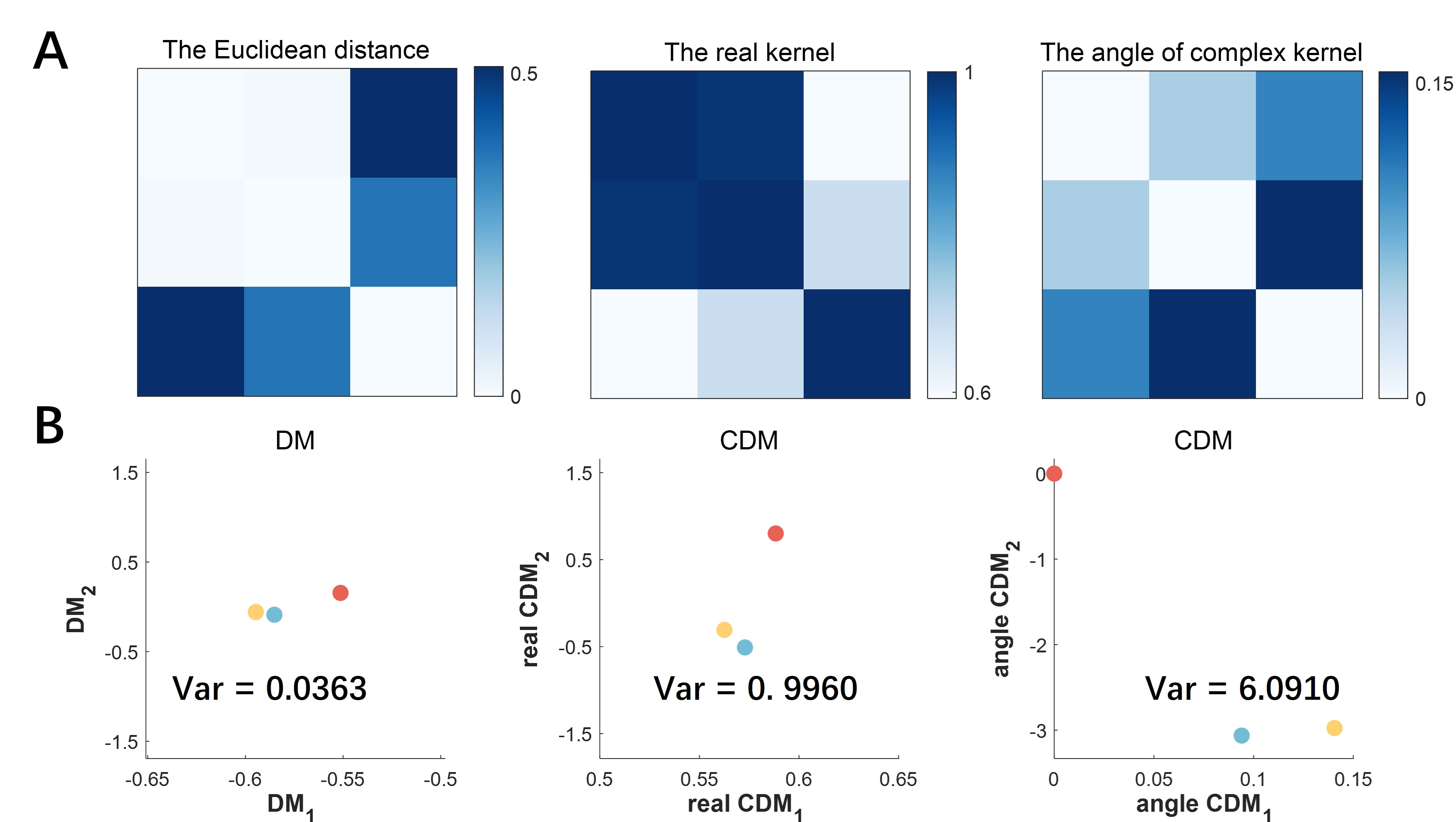}
    \caption{\footnotesize Illustration of the three-point toy example. (A) Euclidean distances (left), real-valued Gaussian kernel (middle), and angles between global connections (right). Points 1 (blue) and 2 (yellow) are close and indistinguishable under the real-valued kernel, while complex weights enhance their differences via angle information. (B) The first two embeddings from DM (left), the real parts of CDM (middle), and the corresponding angles (right). Embedding variances are indicated. Real-valued methods may confuse nearby points, whereas complex-valued kernels leverage phase information to amplify their separation.}
    \label{three_point_example}
\end{figure}

The embeddings of DM and CDM shown in Fig.~\ref{three_point_example}(B) are computed using parameters selected via grid search. The first two embedding dimensions and their corresponding variances show that CDM, which leverages angle information in the complex space, exhibits a global balancing effect on closely spaced points, consistent with the optimization objective in Section~\ref{optimal_sec}. These results suggest that introducing the complex kernel improves the identification of easily confused data points.

\subsubsection{A three-class toy example}\label{exp_artificial}
In order to simulate scenarios where real-valued information is insufficient for class separation, we construct a complex-valued symmetric matrix that explicitly couples amplitude and phase information. Specifically, amplitude and phase are jointly encoded through a weighted combination, and the kernel parameter $\omega$ is chosen to be consistent with this amplitude–phase ratio, ensuring that the induced complex kernel reflects their relative contributions in both magnitude and phase.

Rather than directly generating feature samples, we synthesize a $3\times3$ block-structured matrix that implicitly encodes cluster structures via their statistical properties in the complex domain. Detailed construction procedures are described in Appendix~\ref{Artificial_pointset}.

\begin{figure}[H]
    \centering
    \includegraphics[width=0.9\textwidth]{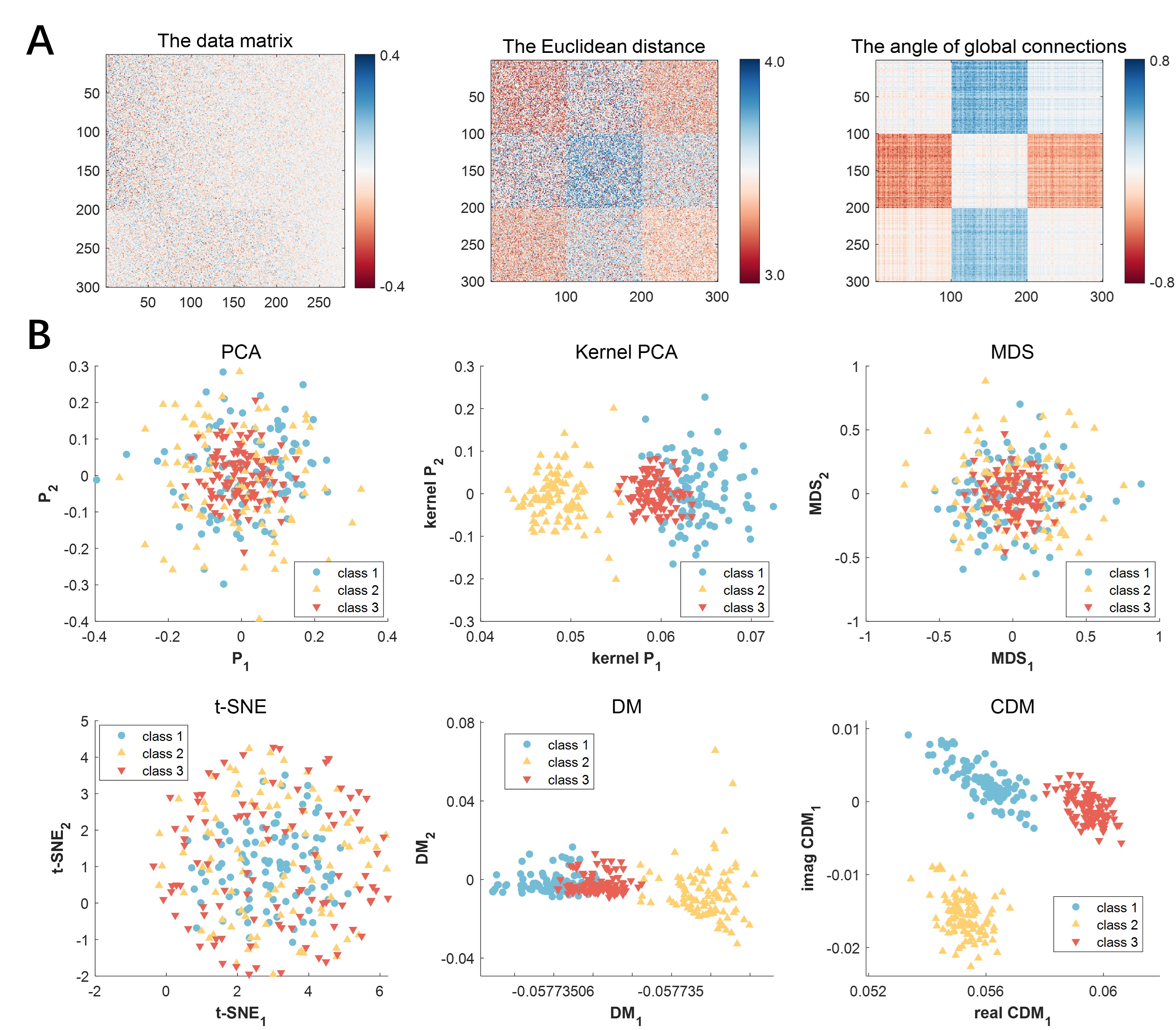}
    \caption{\footnotesize Comparison of CDM with linear and nonlinear real-valued methods on an artificial dataset constructed via amplitude–phase coupling (see Appendix~\ref{Artificial_pointset} for details). (A) Heatmap of the data matrix (left), corresponding Euclidean distances (middle), and angle distribution (right). Classes 1 and 3 exhibit similar distance structures and are difficult to distinguish, while Class 2 shows distinct intra-class distances but similar global connectivity. The angle distribution reveals that Classes 1 and 3 have nearly opposite internal phases, which facilitates their separation. (B) Embedding results of different methods with identical bandwidth settings for kernel-based approaches. Linear methods (PCA, MDS) fail to capture the structure. t-SNE produces a sphere-like representation without clear clusters. Kernel PCA and DM confuse Classes 1 and 3. In contrast, CDM exploits phase information to achieve clearer separation.}
    \label{large_set_example}
\end{figure}

As illustrated in Fig.~\ref{large_set_example}(A), the left panel shows the reconstructed real-valued data derived from the complex domain, where amplitude and phase information are jointly encoded. The middle panel presents the corresponding Euclidean distance matrix, while the right panel visualizes the phase (angle) distribution. 

We compare CDM with both linear methods (PCA, MDS) and nonlinear kernel-based methods (Kernel PCA, t-SNE, and DM), using identical bandwidth settings for all kernel-based approaches. The results are shown in Fig.~\ref{large_set_example}(B).

Linear methods such as PCA and MDS fail to capture the underlying structure of the data. Although t-SNE reveals a spherical structure, it does not provide a meaningful low-dimensional separation. Kernel PCA and DM tend to confuse Classes 1 and 3 due to their similar distance structures. In contrast, CDM exploits phase information to achieve a more discriminative embedding, resulting in clearer separation and improved clustering performance.

\subsection{Noisy signal recognition}
\begin{figure}[ht]
    \centering
    \includegraphics[width=0.9\textwidth]{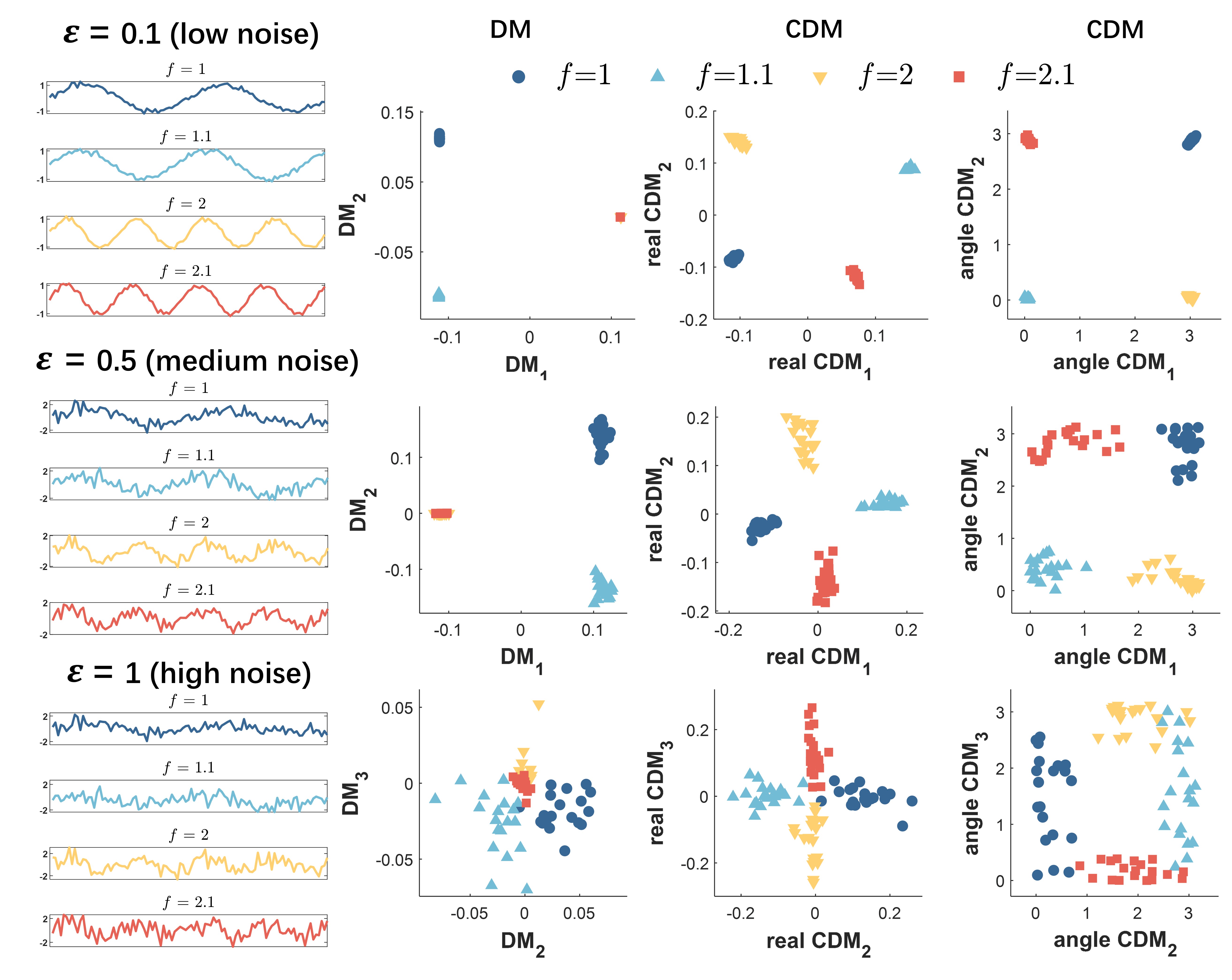}
    \caption{\footnotesize Low-dimensional representations of DM and CDM for a simulated time series under different noise levels, low noise $\varepsilon=0.1$ (SNR $\approx 17$ dB), medium noise $\varepsilon=0.5$ (SNR $\approx 3$ dB), and high noise $\varepsilon=1$ (SNR $\approx -3$ dB). Under low noise, the first two dimensions of DM capture only low-frequency components, whereas CDM reveals a clear four-cluster structure. As noise increases, the embeddings of both methods become more scattered, while CDM still shows a clear four-cluster partition even when the noise power exceeds the signal power (SNR $<0$).}
    \label{time_series}
\end{figure}
This example aims to show that the CDM method can find effective complex harmonic maps for signals with similar frequencies and these maps are robust under different signal-to-noise ratios (SNR). We first generate four sinusoidal signals with Gaussian noise of different frequencies, with 20 samples per frequency,
\begin{displaymath}
x_{i,\mathrm{c}}\left( t \right) =\sin \left( 2\pi f_{\mathrm{c}}t \right) +\eta _i\left( t \right) ,\,\,  \eta _i\left( t \right) \sim \mathcal{N} \left( 0,\varepsilon ^2 \right),
\end{displaymath}
where $i\in \left\{ 1,2,\cdots ,20 \right\} , \mathrm{c}\in \left\{ 1,2,3,4 \right\} 
$ and $f_1=1,f_2=1.1,f_3=2,f_4=2.1$. 

We consider three noise levels: $\varepsilon = 0.1$ (SNR $\approx 17$ dB), $\varepsilon = 0.5$ (SNR $\approx 3$ dB), and $\varepsilon = 1$ (SNR $\approx -3$ dB), corresponding to low, medium, and high noise regimes. Both the kernel bandwidth $\sigma$ and the complex parameter $\omega$ are selected via grid search, while the same $\sigma$ is used for both methods. The results under different noise levels are shown in Fig.~\ref{time_series}.

Fig.~\ref{time_series} shows that the classic real-valued kernel struggles to separate the four classes using only the first two diffusion components even under low-noise conditions, as it primarily distinguishes between low- and high-frequency components. In contrast, CDM leverages both phase and magnitude information in the complex embedding, which appears to better reveal the four-cluster structure. As noise increases, the embeddings of both methods become more dispersed, while CDM still tends to preserve a clearer cluster structure. When the noise power exceeds the signal power, the embeddings are significantly degraded, yet CDM continues to show relatively improved separation compared to traditional DM. These observations suggest that CDM may offer enhanced robustness over real-valued kernel methods for signals with closely spaced frequency components.

\begin{figure}[htbp]
    \centering
    \includegraphics[width=0.95\textwidth]{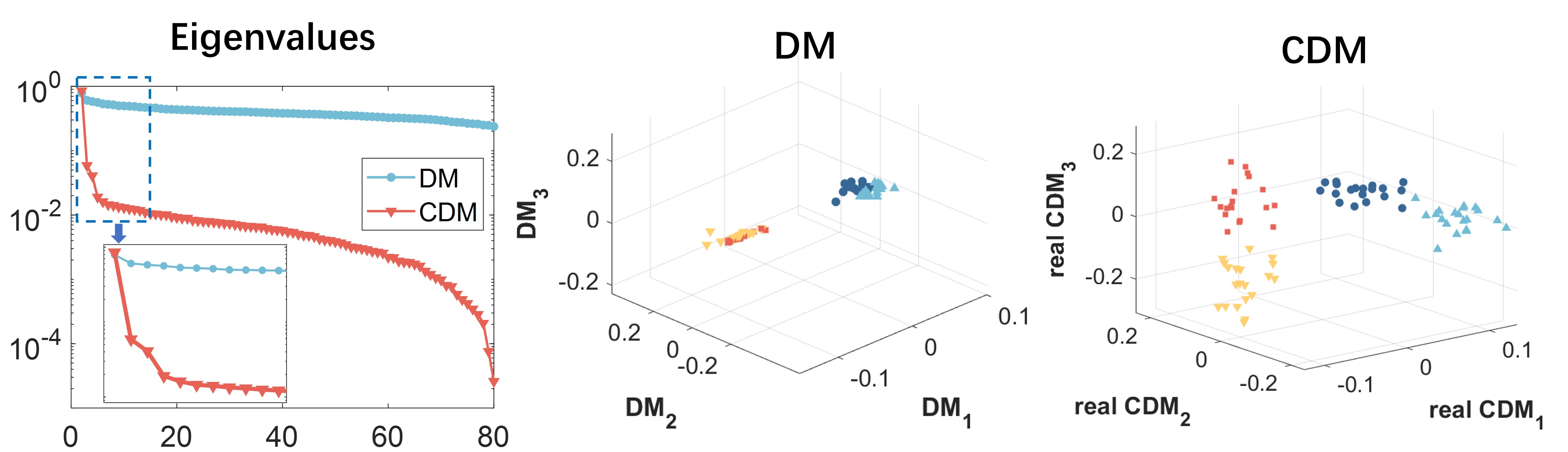}
    \caption{\footnotesize Eigenvalues and the first three dominant maps of DM and CDM for the simulated time series under high noise $\varepsilon=1$ (SNR $\approx -3$ dB). The eigenvalue gaps of CDM are more pronounced than those of DM, with the first three components contributing a larger proportion of the total variance. The corresponding three-dimensional embeddings reveal a clear four-cluster structure. In contrast, the real-valued kernel produces a first map that primarily separates low- and high-frequency components, while the subsequent eigenvalues show no evident gaps.}
    \label{eigenvalues_time_series}
\end{figure}

Fig.~\ref{eigenvalues_time_series} illustrates the eigenvalues and the first three dominant maps of DM and CDM applied to the simulated time series under high noise ($\varepsilon=1$, SNR $\approx -3$ dB). Examining the eigenvalue spectra provides insight into the quality of the embeddings. Larger gaps tend to indicate a more robust separation of dominant modes and a clearer identification of the underlying structure. The results indicate that CDM exhibits more pronounced gaps than DM, with the first three components accounting for a larger proportion of the total variance, suggesting a clearer identification of the dominant structures. The corresponding three-dimensional embeddings further support this observation, where CDM reveals well-separated clusters associated with the four frequency categories. In contrast, the real-valued DM kernel provides only partial separation between low- and high-frequency components in the first map, while the remaining spectrum shows limited separation, indicating a less informative representation. Overall, these results suggest that CDM may offer improved robustness to noise and better preserve the intrinsic structure of signals with closely spaced frequency components compared to traditional DM.

\subsection{FMRI data analysis}

Nonlocal distributed communication in the brain has been shown to play an important role in information transfer~\cite{deco2021rare,deco2025complex}. This experiment aims to verify that CDM can capture such nonlocality in spatiotemporal brain dynamics.

We study resting-state functional magnetic resonance imaging (fMRI) signals from 100 participants in the Human Connectome Project (HCP)~\cite{van2013wu}. A common brain atlas~\cite{glasser2016multi} is applied to each participant after standard preprocessing, resulting in 379 brain regions and 1200 time points. The experimental design follows the framework proposed by Deco et al.~\cite{deco2025complex}. 

Two metrics are considered to quantitatively evaluate the learned embeddings, functional connectivity (FC) reconstruction and edge-centric metastability (ECM). Specifically, we compare FC from the original time series with that reconstructed from the embeddings for each participant, and perform evaluations across kernel bandwidths, values of $\omega$, and diffusion steps to assess robustness. In addition, ECM is employed to quantify the ability to capture fine-scale temporal dynamics. It measures the consistency of edge-level dynamics between the source space and the embedding space, reflecting how effectively transient edge-wise patterns are preserved. Detailed experimental settings, as well as the computation of FC and ECM, are provided in Appendix~\ref{appendix_fMRI}. 

Fig.~\ref{fMRI}~(A) shows the correlation and reconstruction error between the reconstructed and source FC matrices across different kernel bandwidths $\sigma^2$ and diffusion steps $t$. To provide a clearer comparison, Fig.~\ref{fMRI}~(B) further reports the FC reconstruction performance at the selected optimal kernel bandwidth $\sigma$ identified in (A).

From the results, it can be observed that as the diffusion step $t$ increases, the real-valued kernel, which emphasizes local connections, tends to produce more homogeneous representations, leading to degraded performance. In contrast, the complex-valued kernel incorporates nonlocal interactions, and the presence of the imaginary component slows the convergence toward isotropy, enabling the capture of finer dynamical structures. As a result, it maintains relatively higher correlation and lower reconstruction error across diffusion steps.

We further evaluate the ability of PCA, DM, and CDM to capture nonlocal brain dynamics using ECM~\cite{deco2025complex,faskowitz2020edge}. Specifically, we compute the correlation between ECM in the source space and that in the embedding space, as shown in Fig.~\ref{fMRI}~(C).

The results indicate that linear methods fail to capture complex spatiotemporal patterns. Under the same conditions, the family of $\omega$-parameterized complex kernels generally achieves higher correlations than real-valued kernels, suggesting that CDM may better preserve the underlying spatiotemporal dynamics.

\begin{figure}[!htbp]
    \centering
    \includegraphics[width=1.0\textwidth]{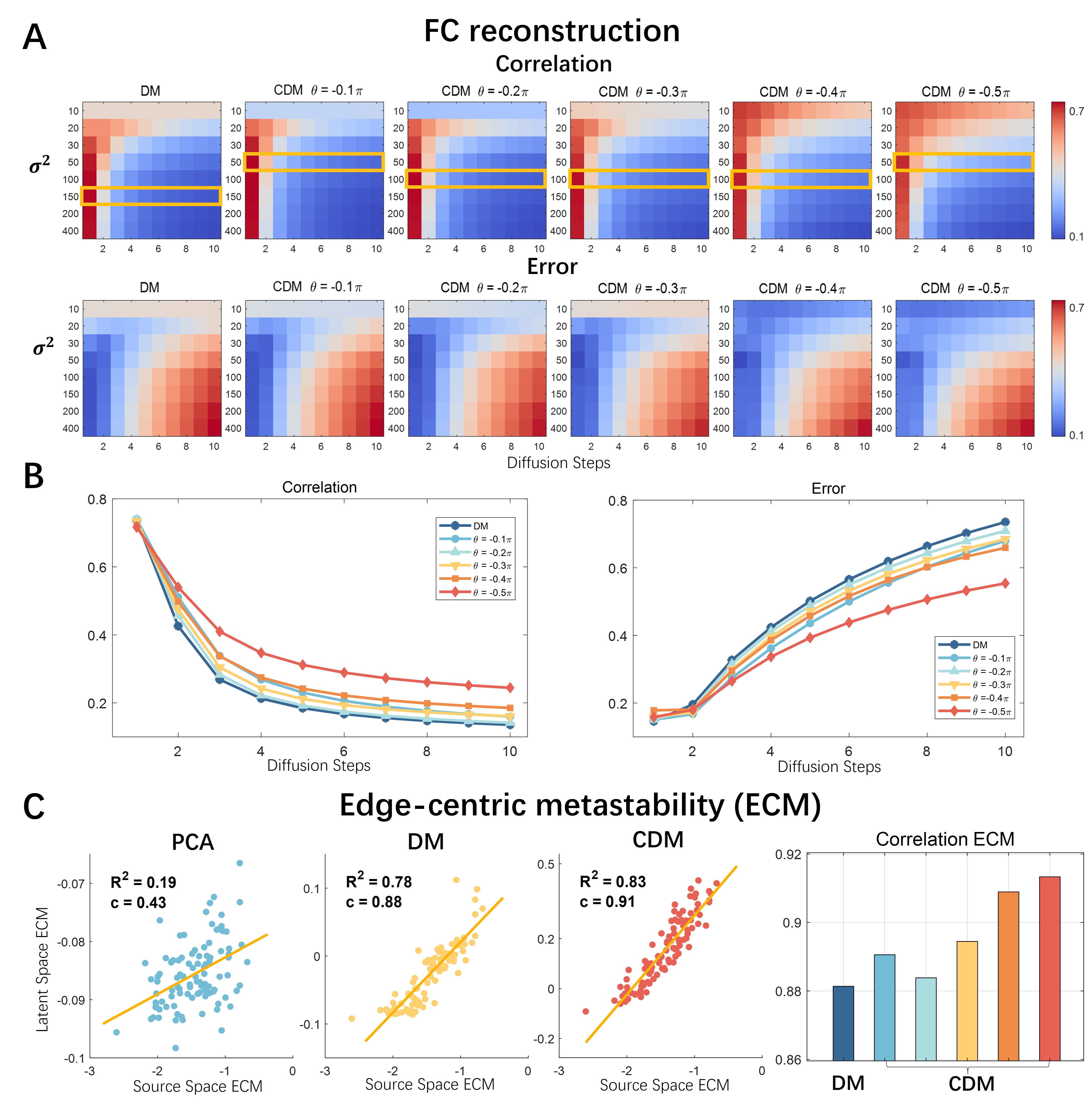}
    \caption{\footnotesize Comparison of functional connectivity (FC) reconstruction and edge-centric metastability (ECM) derived from resting-state fMRI signals of 100 participants from the Human Connectome Project~\cite{van2013wu} under PCA, DM, and CDM frameworks. (A) Correlation and reconstruction error between reconstructed and source FC matrices in the learned low-dimensional manifolds, evaluated over kernel bandwidth $\sigma^2$ and diffusion step $t$. (B) FC reconstruction performance at the selected optimal kernel bandwidth $\sigma$ (highlighted in (A)). CDM employs complex-valued kernels parameterized by $\omega = e^{\mathbf{i}\theta}$ and generally achieves higher correlation and lower error, particularly at larger diffusion steps. (C) Comparison of global spatiotemporal dynamics using ECM. Scatter plots show participant-wise correlations between source and manifold spaces, and the bar plot summarizes overall performance. CDM yields higher correlations than PCA and DM.} 
    \label{fMRI}
\end{figure}

\subsection{EEG sleep stage clustering and classification}\label{exp_eeg}
Accurate identification of sleep stages plays a crucial role in the assessment of sleep quality and the diagnosis of sleep-related disorders. In this experiment, multi-class clustering and classification are conducted on a publicly available sleep stage dataset to evaluate the effectiveness of the proposed CDM method.

We use the ISRUC-S3 dataset~\cite{khalighi2016isruc}, which contains overnight polysomnography (PSG) signals recorded from 10 healthy subjects. The recordings are annotated into five sleep stages, namely Wake, N1, N2, N3, and rapid eye movement (REM). In this experiment, 10 channels are selected from the PSG recordings, including 6 electroencephalography (EEG) channels, 2 electrooculography (EOG) channels, 1 electromyography (EMG) channel, and 1 electrocardiography (ECG) channel. Following the standard processing pipeline, all signals are downsampled from 200 Hz to 100 Hz. The final 30 seconds records of each epoch are excluded for subsequent analysis. Specific explanations for each sleep stage and more details about the ISRUC-S3 dataset are provided in Appendix ~\ref{Dataset_description}. The experimental procedure includes the following four stages.

\textbf{Order-$p$ data stacking.} For each sleep EEG signal, only ten channels are available in the spatial domain, which may lead to insufficient spectral complexity. Order-$p$ data stacking expands the effective spatial dimensionality by incorporating time-delayed copies of the original signals. This idea is rooted in delay-coordinate embedding~\cite{packard1980geometry,takens2006detecting}. $p$ denotes the stacking order.

\textbf{CDM computation.} CDM is then applied along the temporal dimension to extract nonlinear embeddings that capture interactions across channels. When no reliable prior knowledge is available, grid search remains a practical strategy for hyperparameter selection. 

\textbf{Embedding alignment.} Since the low-dimensional embeddings obtained from different samples are not naturally aligned in the same space, an embedding alignment step is required. For each sample $i$, we solve the following optimization problem:
\[
\underset{\boldsymbol{O}_i}{\min}\; \left\| \boldsymbol{E}_i \boldsymbol{O}_i - \boldsymbol{E}_{\mathrm{ref}} \right\|_F^2 ,
\]
where $E_{\mathrm{ref}}$ is the low-dimensional embedding of a selected reference sample, $E_i$ is the embedding of sample $i$, and $O_i$ is a unitary matrix. This procedure aligns all embeddings into a common space. 

\textbf{Downstream analysis.} The analysis considers two paradigms: unsupervised learning~\cite{hastie2008unsupervised} and supervised learning~\cite{cunningham2008supervised}.
\begin{itemize}
    \item \textbf{Unsupervised learning.} The K-means~\cite{macqueen1967multivariate} algorithm is applied to cluster all samples in the embedding space jointly across subjects, focusing on the global structure of the learned features. The number of clusters is set to 5, and all parameters are kept at their default values.
    
    \item \textbf{Supervised learning.} A linear-kernel SVM classifier is employed for classification, as CDM extracts nonlinear features. The \textbf{intra-subject setting} is adopted, where each subject’s data are split into training, validation, and testing sets with a ratio of $8{:}1{:}1$, and the penalty parameter is selected based on validation performance. A 10-fold cross-validation scheme is adopted across subjects, where each fold corresponds to one subject.
\end{itemize}

For the clustering baselines, several classical dimensionality reduction and manifold learning methods are selected as listed in~\ref{baselines_clustering}. All methods follow the same unsupervised pipeline as CDM, including higher-order data stacking, embedding alignment, and K-means clustering with identical parameter settings. Hyperparameters are listed in Table~\ref{tab:clustering_comparison} and determined via grid search. Specifically, $\sigma \in [0.5,5]$, $\theta \in [-0.5\pi,-0.1\pi]$ (controlling $\omega$), and the neighborhood size is chosen from $[6,8,12]$.

Figure~\ref{eeg_cluster_result} presents the clustering results on the ISRUC-S3 dataset~\cite{khalighi2016isruc}. As shown in Fig.~\ref{eeg_cluster_result}~(A), both CDM and DM achieve optimal performance at embedding dimension $d=5$, and performance slightly degrades as $d$ further increases, suggesting that excessively high-dimensional embeddings may introduce redundant or noisy components. CDM consistently outperforms DM across different embedding dimensions and stacking orders, demonstrating its robustness to embedding choices.

\begin{figure}[!htbp]
    \centering
    \includegraphics[width=1.0\textwidth]{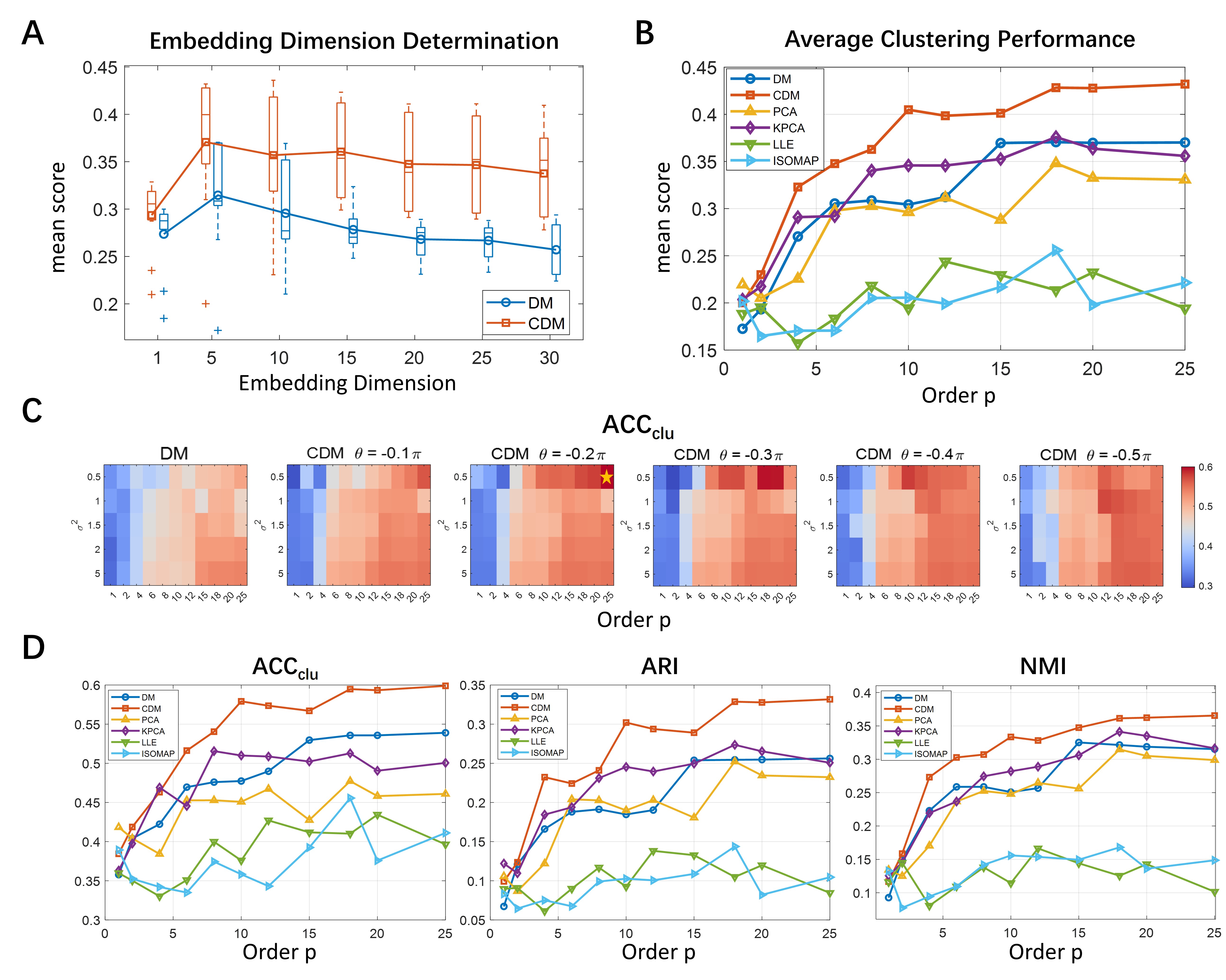}
    \caption{\footnotesize
Clustering results on the ISRUC-S3 dataset~\cite{khalighi2016isruc}. The dataset contains overnight polysomnography (PSG) recordings from 10 subjects, annotated into five sleep stages: Wake, N1, N2, N3, and rapid eye movement (REM). The evaluation metrics $\mathrm{ACC_{clu}}$, ARI, and NMI are described in~\ref{Evaluation_matrices_clu}.
(A) Mean clustering scores of CDM and DM under varying embedding dimensions and stacking orders $p$. CDM consistently achieves higher scores than DM, and both methods attain optimal performance at an embedding dimension of 5.
(B) Comparison of mean clustering scores between CDM and baseline methods at the optimal embedding dimension, showing the superiority of CDM.
(C) Clustering accuracy $\mathrm{ACC_{clu}}$ of CDM and DM under different $p$ and kernel bandwidth $\sigma$ at the optimal embedding dimension. CDM achieves higher accuracy across a wide range of parameters, with the best result marked by a yellow pentagram.
(D) Comparison of $\mathrm{ACC_{clu}}$, ARI, and NMI between CDM and baseline methods at the optimal embedding dimension, where CDM outperforms all baselines across the three metrics.
}
\label{eeg_cluster_result}
\end{figure}
Based on this observation, the embedding dimension is fixed at $d=5$ in the subsequent experiments. Under this setting, CDM achieves the best average performance among all compared methods, confirming its advantage over real-valued baselines in capturing the intrinsic data structure, as illustrated in Fig.~\ref{eeg_cluster_result}~(B).

The parameter sensitivity analysis in Fig.~\ref{eeg_cluster_result}~(C) further shows that CDM maintains stable performance across a wide range of kernel bandwidths $\sigma$ and stacking orders $p$. In particular, the clustering accuracy is relatively insensitive to the complex-valued kernel parameter $\omega$, indicating that CDM does not rely heavily on fine-tuning of phase-related parameters.

Finally, Fig.~\ref{eeg_cluster_result}~(D) demonstrates that CDM consistently achieves the best performance across all evaluation metrics. Kernel-based methods (DM and KPCA) generally outperform linear methods such as PCA, highlighting the importance of nonlinear feature extraction. In contrast, graph-based methods (LLE and ISOMAP) exhibit inferior and less stable performance, likely due to their sensitivity to neighborhood graph construction.

Table~\ref{tab:clustering_comparison} shows the clustering performance comparison between CDM and benchmark methods under the fixed embedding dimension and optimal stacking order. The FDR metric refers to the Fisher discriminant ratio (FDR) calculated after visualizing the embeddings of each method in three-dimensional space using t-SNE ~\cite{van2008visualizing}. For details on the calculation, refer to ~\ref{Evaluation_matrices_clu}. A higher FDR value indicates a relatively higher degree of inter-class separation and a relatively better degree of intra-class aggregation.

The results in Table~\ref{tab:clustering_comparison} indicate that CDM achieves the best results compared to other benchmark methods in $\mathrm{ACC_{clu}}$, ARI, NMI, the mean of the clustering metrics, and the degree of intra-class aggregation in the embedding visualization. This indicates that the introduction of complex-valued similarity metrics in CDM could indeed extract more discriminative information beneficial for distinguishing sleep stages compared to real-valued methods.

\begin{table}[htbp]
\centering
\small
\setlength{\tabcolsep}{4pt}
\renewcommand{\arraystretch}{1}
\caption{\footnotesize{Comparison of clustering performance between CDM and baselines at embedding dimension of 5 under optimal stacking orders. Evaluation metrics are defined in Appendix~\ref{Evaluation_matrices_clu}. The mean score is computed as $(\mathrm{ACC_{clu}}+\mathrm{ARI}+\mathrm{NMI})/3$. Bold and underline indicate the best and second-best results, respectively.}}
\label{tab:clustering_comparison}
\begin{tabular}{ccccccc}
\toprule
Method & Hyperparameter setting & $\mathrm{ACC_{clu}}$ & ARI & NMI & Mean score & FDR\\
\midrule
PCA~\cite{wold1987principal}   & \textbf{--}                    & 0.477 & 0.252 & 0.315 & 0.348 & 0.092 \\
KPCA~\cite{scholkopf1997kernel}   &  bandwidth $\sigma$         & 0.516 &  \underline{0.274} &  \underline{0.341} &  \underline{0.376} & 0.068\\
LLE~\cite{roweis2000nonlinear}    &  Number of neighbors $k$              & 0.434 & 0.138 & 0.166 & 0.244 & 0.119\\
ISOMAP~\cite{tennenbaum2000global} &  Number of neighbors $k$              & 0.456 & 0.144 & 0.168 & 0.256 & 0.151\\
DM~\cite{coifman2005geometric}     &  bandwidth $\sigma$         &  \underline{0.539} & 0.256 & 0.325 & 0.371 & \underline{0.189}\\
\textbf{CDM}    &  bandwidth $\sigma$, complex kernel parameter $\omega$  & \textbf{0.599} & \textbf{0.332} & \textbf{0.365} & \textbf{0.432} & \textbf{0.309}\\
\bottomrule
\end{tabular}
\end{table}

Fig.~\ref{eeg_class_result} presents the classification performance on the ISRUC-S3 dataset~\cite{khalighi2016isruc} under the intra-subject setting. As shown in Fig.~\ref{eeg_class_result}~(A), the grid search results exhibit no clear pattern, suggesting that the optimal configuration is data-dependent in the absence of prior knowledge. As shown in Fig.~\ref{eeg_class_result}~(B), moderate stacking orders $p$ improve generalization, while larger orders lead to overfitting, which can be interpreted through the bias--variance trade-off, where overly representations tend to capture dataset-specific artifacts rather than discriminative sleep-stage patterns. Finally, Fig.~\ref{eeg_class_result}~(C) shows that N1 is classified with lower accuracy and is often confused with N2. This is consistent with the fact that N1 and N2 correspond to transitional stages with similar physiological characteristics, making them inherently difficult to distinguish.

\begin{figure}[ht]
    \centering
    \includegraphics[width=0.95\textwidth]{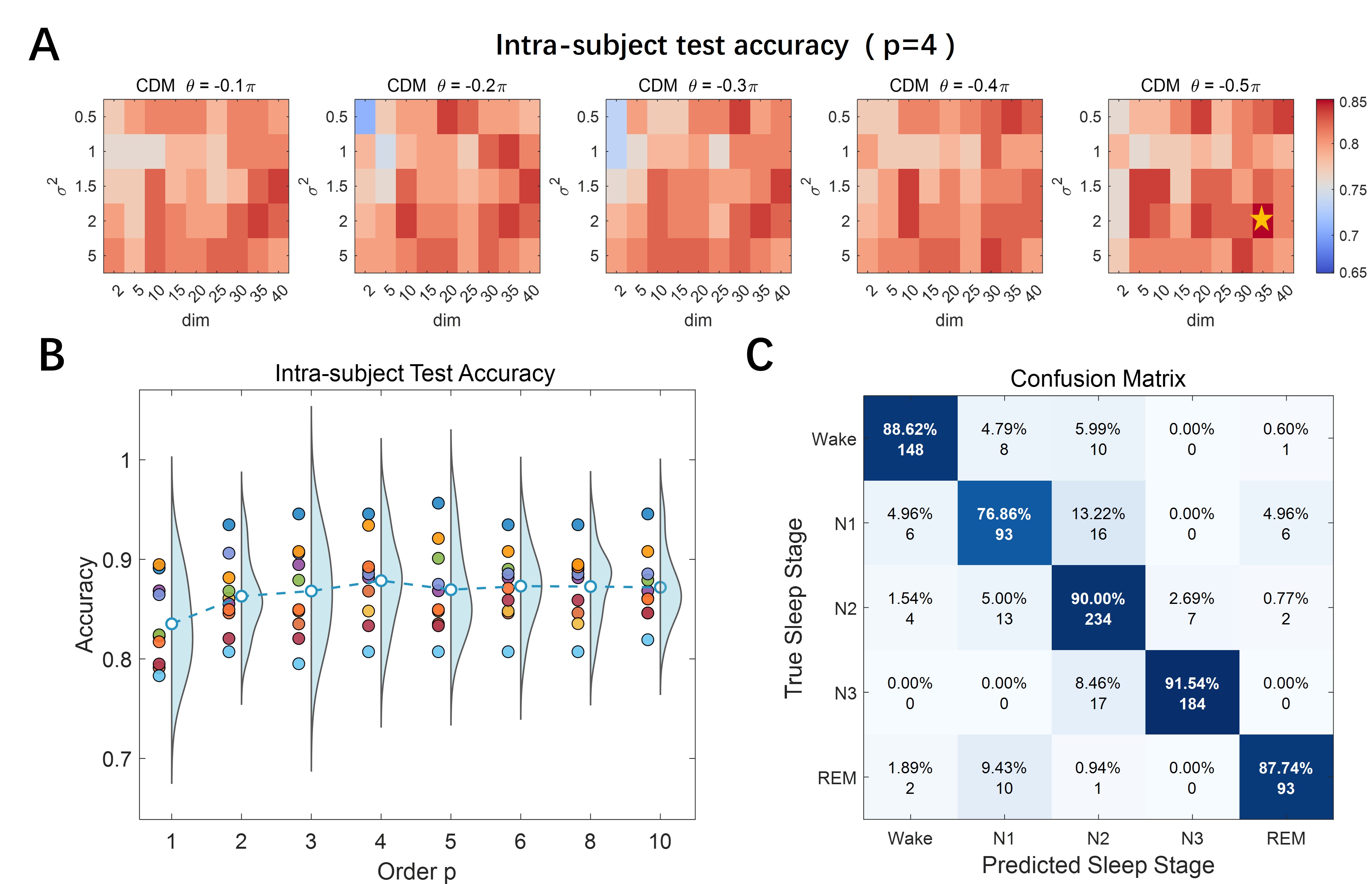}
    \caption{\footnotesize
10-fold cross-validation results of CDM under the intra-subject setting on the ISRUC-S3 dataset~\cite{khalighi2016isruc}.
(A) Test accuracy $\mathrm{ACC_{cls}}$ for one fold at stacking order $p=4$, where \textit{dim} denotes the embedding dimension. The best result is marked with a star.
(B) Best test accuracy of each fold under different stacking orders $p$, with the mean accuracy indicated by hollow markers, showing the optimal order at $p=4$.
(C) Confusion matrix corresponding to the overall results at $p=4$.
}
\label{eeg_class_result}
\end{figure}

Table~\ref{tab:isruc_s3_comparison} reports the classification performance under the intra-subject setting. CDM consistently outperforms four deep neural network baselines across all metrics. This advantage could be attributed to two main factors: (1) limited training data per subject, under which complicated neural networks tend to overfit due to insufficient supervision, while CDM provides a more compact and data-efficient representation that generalizes better in small-sample regimes; and (2) the low-dimensional neural manifold underlying neural activity~\cite{gallego2017neural,perich2025neural}, which is more naturally captured by the manifold learning framework. By explicitly exploiting this intrinsic geometric structure, CDM can better preserve the underlying temporal and dynamical patterns, enabling more effective modeling of subject-specific neural dynamics.

In addition to classification performance, the computational efficiency of different methods is further compared. As illustrated in Fig.~\ref{runtime_intra}, CDM achieves a significantly lower runtime than all deep neural network baselines. Specifically, while neural network models require tens to over one hundred seconds due to their complex architectures and training procedures, CDM completes the entire pipeline within only a few seconds.

This efficiency gain stems from the lightweight design of CDM, which avoids iterative backpropagation and large-scale parameter optimization. As shown in the runtime breakdown, the majority of computation is concentrated in the order-$p$ data stacking and CDM embedding stages. Thus, CDM achieves both superior classification performance and higher computational efficiency.

\begin{table*}[htbp]
\centering
\small
\setlength{\tabcolsep}{4pt}
\caption{\footnotesize{Comparison of classification performance between CDM and baselines under the intra-subject setting. Evaluation metrics are defined in Appendix~\ref{Evaluation_matrices_cls}. The bolded and underlined results represent the best and second-best, respectively, for partial results.}}
\label{tab:isruc_s3_comparison}
\setlength{\tabcolsep}{6pt}
\begin{tabular}{c c c c c c c c c c}
\toprule
\multirow{2}{*}{\makecell{Setting}}&
\multirow{2}{*}{Method} &
\multicolumn{3}{c}{Overall results} &
\multicolumn{5}{c}{F1-score for each class}\\
\cmidrule(lr){3-5} \cmidrule(lr){6-10}
& & $\mathrm{ACC_{cls}}$ & F1 & Kappa & Wake & N1 & N2 & N3 & REM  \\
\midrule

\multirow{5}{*}{\makecell{Intra-\\subject}}
& SalientSleepNet~\cite{ijcai2021p360}    
& 0.750 & 0.714 & 0.674 & 0.814 & 0.510 & 0.728 & 0.904 & 0.613\\
& MSTGCN~\cite{jia2021multi}              
& 0.748 & 0.724 & 0.674 & 0.812 & 0.435 & 0.794 & 0.821 & 0.795\\
& AttenNet~\cite{eldele2021attention}    
& 0.851 & 0.821 & 0.805 & \underline{0.900} & 0.605 & \underline{0.860} & \underline{0.934} & 0.804\\
& SleepPrintNet~\cite{jia2021sleepprintnet} 
& \underline{0.853} & \underline{0.839} & \underline{0.808} & 0.882 & \underline{0.684} & 0.851 & 0.927 & \underline{0.850} \\
& \textbf{CDM}                    
& \textbf{0.880} & \textbf{0.874} & \textbf{0.845} & \textbf{0.905} & \textbf{0.759} & \textbf{0.870} & \textbf{0.939} & \textbf{0.894}\\



\bottomrule
\end{tabular}
\end{table*}
Finally, it is worth emphasizing that CDM is a general-purpose framework rather than a model explicitly designed for EEG sleep stage classification. The competitive performance reported here is achieved without task-specific architectural tuning, suggesting strong potential for applicability across a wide range of datasets and problem domains beyond sleep analysis.

\begin{figure}[htbp]
    \centering
    \includegraphics[width=0.85\textwidth]{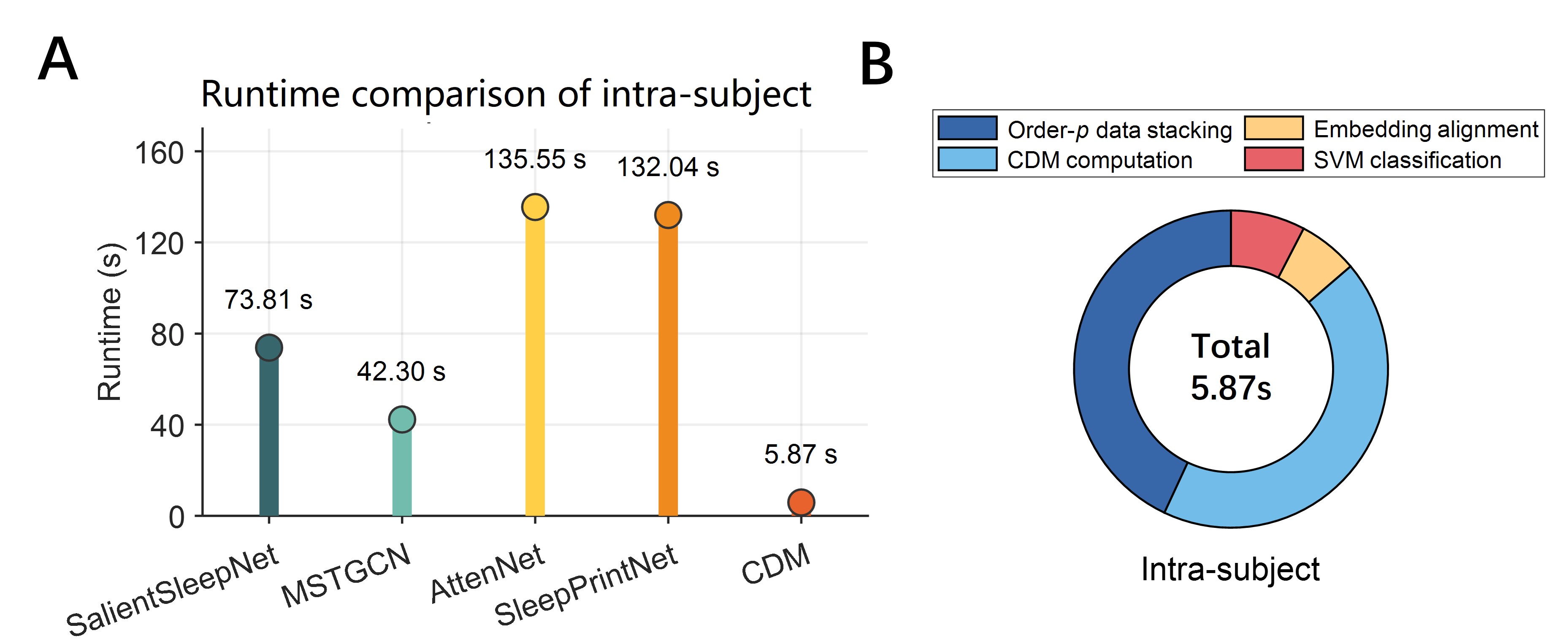}
    \caption{\footnotesize{Display of runtime on intra-subject setting. Runtime refers to the average runtime of 10-fold experiments with their respective optimal parameters. (A) Comparison of the runtime of CDM with baseline neural network-based methods. (B) Breakdown of CDM’s runtime across its four phases, Order-$p$ data stacking, CDM computation, embedding alignment, and SVM classification.} } 
    \label{runtime_intra}
\end{figure}
To further evaluate the generalization ability of CDM cross subjects, additional classification experiments are conducted under the cross-subject setting, focusing on its ability to transfer learning. The results show that CDM outperforms several machine learning and neural network baselines, while still maintaining clear computational efficiency. Due to space limitations, detailed results and analysis are presented in Appendix~\ref{appendix_eeg_cross_result}.

\section{Conclusion} \label{conclusion}
Inspired by the Gaussian kernel induced by the heat equation and the complex-valued Schrödinger kernel induced by the Schrödinger equation, we propose a family of unified generalized $\omega$-parameterized complex-valued kernels. We further apply it to manifold learning and propose Complex Diffusion Maps (CDM), a novel algorithm that aims to reveal low-dimensional complex harmonic representations of high-dimensional data. We also explain the optimal complex embedding by a discrete form of optimization problem, where points with similar global connection angles in complex space are expected to stay as close as possible in the embedded space. 

Extensive experiments demonstrate that CDM is both effective and robust. The proposed complex-valued formulation improves sample separability and preserves intrinsic structures even under noise. Experiments on resting-state fMRI data show that CDM can accurately capture functional connectivity patterns and global nonlocal dynamics. On EEG-based sleep stage data, CDM achieves competitive clustering and classification performance, while offering clear advantages in computational efficiency over classical manifold learning and deep neural network methods.

The CDM framework still has some possible extensions and generalizations. One potential direction is the development of an adaptive method to optimize the complex-valued kernel parameter $\omega$ for improved performance. Additionally, incorporating complex-valued neural networks may enhance learning within this framework. From an optimization perspective, CDM is closely related to the Hermitian Laplacian, which plays a crucial role in graph neural networks~\cite{kipf2017semisupervised}. Applying the Hermitian Laplacian to extract richer possible information from complex low-dimensional manifolds presents a promising avenue for further research.

\section*{Funding} 
This work was partially supported by the Science and Technology Commission of Shanghai Municipality (No. 23ZR1403000) and the National Natural Science Foundation of China (No. 12471481). The research of M. Ng was supported by the GDSTC: Guangdong and Hong Kong Universities “1+1+1” Joint Research Collaboration Scheme UICR0800008-24, National Key Research and Development Program of China under Grant 2024YFE0202900, RGC GRF 12300125.


\section*{Conflict of Interest}
The authors declare that they have no conflict of interest.

\appendix
\section*{Appendix}
\addcontentsline{toc}{section}{Appendix}
\section{Out-of-sample extensions}\label{Out-of-sample}
Algorithm~\ref{algorithm 1} outlines the computational steps of CDM. Let $d$ be the feature dimension and $N$ the number of samples, with $d \ll N$. The time and space complexities are $\mathcal{O}(N^2 d + N^3)$ and $\mathcal{O}(N^2)$, respectively. It is prohibitive for large datasets. In real-valued kernels, sparsity is often introduced by thresholding small entries, but this is unsuitable for complex kernels since magnitude-based truncation may lose phase information. To overcome this, we adopt a Nyström-based extension~\cite{williams2000using} for efficient out-of-sample computation.

Recalling the derivation of Theorem~\ref{theorem_CDM}, the diffusion operator $\mathbf{A}$ is compact and self-adjoint, satisfying 
$
\mathbf{A}\bm{\phi }_n = \lambda _n \bm{\phi }_n,
$
where $\{\lambda_n\}_{n\in \mathbb{N}}$ are the eigenvalues and $\{\bm{\phi}_n\}_{n\in \mathbb{N}}$ are the corresponding eigenfunctions. 
Combining with Eq.~\eqref{eq_operator_A}, for any eigenvalues $\lambda_n > 0$, we have
\begin{displaymath}
\lambda _{n}^{1/2}\bm{\phi }_n=\frac{\mathbf{A}\bm{\phi }_n}{\lambda _{n}^{1/2}}=\int_{\Gamma}{\frac{a\!\left( \bm{x},\bm{y} \right)}{\lambda _{n}^{1/2}}\bm{\phi }_n\!\left( \bm{y} \right) \,\mathrm{d}\mu \!\left( \bm{y} \right)}.
\end{displaymath}
For a newly given data point $\bm{z}$, the Nyström method~\cite{williams2000using} extends this expression to
\begin{displaymath}
\lambda _{n}^{1/2}\bm{\phi }_n\!\left( \bm{z} \right) = \int_{\Gamma}{\frac{a\!\left( \bm{z},\bm{y} \right)}{\lambda _{n}^{1/2}}\bm{\phi }_n\!\left( \bm{y} \right) \,\mathrm{d}\mu \!\left( \bm{y} \right)}.
\end{displaymath}
Approximating the integral by a discrete sum yields
\begin{displaymath}
\lambda _{n}^{1/2}\bm{\phi }_n\!\left( \bm{z} \right) \approx \sum_{i=1}^N{\frac{a\!\left( \bm{z},\bm{y}_i \right)}{N\lambda _{n}^{1/2}}\,\bm{\phi }_n\!\left( \bm{y}_i \right)}.
\end{displaymath}
Therefore, the first $k$ complex diffusion maps with $t$ diffusion steps for an out-of-sample point $\bm{z}$ can be expressed as
\begin{equation}\label{eq_out_of_sample}
\bm{\psi }_{k}(\bm{z}) = \sum_{i=1}^N 
\left[
\begin{matrix}
\dfrac{a\!\left( \bm{z},\bm{y}_i \right)}{\lambda _{1}^{1/2}}\,\bm{\phi }_1\!\left( \bm{y}_i \right), &
\dfrac{a\!\left( \bm{z},\bm{y}_i \right)}{\lambda _{2}^{1/2}}\,\bm{\phi }_2\!\left( \bm{y}_i \right), &
\cdots, &
\dfrac{a\!\left( \bm{z},\bm{y}_i \right)}{\lambda _{s}^{1/2}}\,\bm{\phi }_s\!\left( \bm{y}_i \right)
\end{matrix}
\right].
\end{equation}
Note that the factor $1/N$ has the same effect on each map and each new data, so it does not affect the embedded representation and is omitted here. 

Let $\bm{\varPhi} = \left[ \bm{\phi}_1, \bm{\phi}_2, \cdots, \bm{\phi}_s \right]$ 
and $\bm{\varLambda} = \operatorname{diag}(\lambda_1, \lambda_2, \cdots, \lambda_s)$ 
denote the first $k$ eigenvectors and eigenvalues of the kernel matrix $\bm{A}$ in Algorithm~\ref{algorithm 1} computed by the existing $N$ samples, respectively. $\bm{A}_{\mathrm{G}}$ denotes the kernel matrix between the generalized points and the existing points with size of $N_{\mathrm{G}} \times N$. Then, Eq.~\eqref{eq_out_of_sample} can be expressed in matrix form, where the first $k$ eigenmaps for the $N_{\mathrm{G}}$ new samples are denoted by $\bm{\varPhi}_{\mathrm{G}}^{t}$:
\begin{equation}\label{eq_matrix_out_of_sample}
    \bm{\varPhi}_{\mathrm{G}} = \bm{A}_{\mathrm{G}} \bm{\varPhi} \bm{\varLambda}^{-\frac{1}{2}}.
\end{equation}

In subsequent experiments, we combine the expression of out-of-sample extensions and the Geometric Harmonics idea to reconstruct the generalized dataset, in order to illustrate the ability of CDM to extract latent information. See Appendix~\ref{appendix_fMRI} for details.
\section{Artificial point set construction}\label{Artificial_pointset}
In this appendix, we provide a detailed description of the construction process for the artificial dataset in Section~\ref{exp_artificial}, which extends the illustrative example in Section~\ref{three_point_example} to a larger-scale setting. The goal is to simulate a scenario in which different classes may overlap in the real domain but remain separable in a complex feature space through a joint encoding of amplitude and phase information. By explicitly modeling both magnitude and phase relationships, this setup allows us to investigate the performance of complex-valued methods in capturing cluster structures that may not be evident in the original real-valued data.

Specifically, instead of explicitly generating feature vectors, we directly construct a complex-valued relational matrix that implicitly encodes a three-cluster structure in the complex domain.The clusters are characterized by their associated means $\mu_1 = 1 + \mathbf{i}, \mu_2 = 1 - \mathbf{i}, \mu_3 = -1 + 2\mathbf{i}$, with each cluster containing $n = 100$ samples, for a total of $N = 300$ samples. The off-diagonal entries of the complex matrix $\boldsymbol{D} \in \mathbb{C}^{N\times N}$ are generated as
\begin{displaymath}
\boldsymbol{D}_{ij} = \mu_c + \eta (z_{ij} + \mathbf{i}\,y_{ij}),
\qquad
z_{ij}, y_{ij} \sim \mathcal{N}(\mathbf{0}, \mathbf{I}), \quad i \neq j,
\end{displaymath}
where $c$ denotes the cluster of sample $j$, and $\eta = 2$ controls the variability of the entries. The resulting matrix $\boldsymbol{D}$ encodes both amplitude and phase information and can be interpreted as a complex-valued affinity matrix describing pairwise couplings between samples. To obtain a real-valued distance representation that integrates both characteristics, we combine the normalized phase and magnitude components with weights $\alpha$ and  $\beta$, corresponding to their relative importance,
\begin{displaymath}
\tilde{\boldsymbol{D}}_{ij}
= \alpha \cdot 
\frac{|\boldsymbol{D}_{ij}|}{|\boldsymbol{D}|_{\max}}
+ \beta \cdot 
\frac{\mathrm{mod}\!\left(\arg(\boldsymbol{D}_{ij}), \, 2\pi\right)}{2\pi}, \qquad \alpha=0.1, \beta =0.5.
\end{displaymath}

Here, $|\bm{D}_{ij}|$ denotes the magnitude of $\bm{D}_{ij}$, $\arg(\bm{D}_{ij})$ represents its phase angle, and $\mathrm{mod}(\cdot,2\pi)$ maps the phase into the range $[0,2\pi)$ to ensure positivity before normalization. $\max_{p,q}|\bm{D}_{pq}|$ denotes the maximum magnitude over all elements. Consequently, Consequently, the first term captures the normalized magnitude information, while the second term reflects the relative phase relationship. The weighting coefficients $\alpha$ and $\beta$ balance their contributions, reflecting the relative importance of magnitude and phase in constructing the complex similarity structure. The final symmetric matrix used for the subsequent analysis is then given by
\begin{displaymath}
\bm{D}_{\mathrm{final}} = \frac{1}{2}(\tilde{\bm{D}} + \tilde{\bm{D}}^{\top}) \in \mathbb{R}^{N\times N}.
\end{displaymath}

Based on multidimensional scaling~\cite{davison2000multidimensional}, double centering is applied to $\bm{D}_{\mathrm{final}}$ to obtain an explicit feature embedding in a Euclidean space. This results in a reconstructed dataset with 279-dimensional features. The resulting matrices are illustrated in Fig.~\ref{large_set_example}(A), where the left panel shows the reconstructed dataset. This reconstructed dataset can be viewed as points lying in three clusters within a complex-feature space, where both amplitude and phase jointly determine inter-cluster relationships. The middle panel depicts the corresponding Euclidean distance matrix $\bm{D}_{\mathrm{final}}$, and the right panel visualizes the phase-angle distribution of the complex kernel entries. 

The kernel parameter $\omega$ in CDM is chosen to be consistent with the amplitude–phase weighting ratio, so that its direction in the complex plane reflects the relative weighting between amplitude and phase components:
\begin{displaymath}
    \omega = \frac{ \alpha- \beta\mathbf{i}}{|\alpha - \beta\mathbf{i}|},
\end{displaymath}
where the negative sign takes into account the range of $\omega$ as illustrated in Fig.~\ref{complex_kernel}. 

We further compare CDM with both linear methods, including PCA and MDS, and nonlinear methods, including Kernel PCA, t-SNE, and diffusion maps (DM). All kernel-based methods use the same bandwidth setting for a fair comparison. The results are shown in Fig.~\ref{large_set_example} (B).

\section{fMRI data experiment}\label{appendix_fMRI}
\subsection{Dataset description}
Functional magnetic resonance imaging (fMRI) data measures the blood oxygen level-dependent (BOLD) signal , which provides an indirect indicator of neural activity.  We used the resting-state fMRI data from 100 participants in the Human Connectome Project (HCP) database~\cite{van2013wu}. The preprocessing followed established pipelines that included the removal of spatial artifacts, reconstruction of the cortical surface, cross-modal registration, and alignment to a standard space~\cite{glasser2013minimal}. Then a common brain region template~\cite{glasser2016multi} was applied to each participant, resulting in 379 brain regions and 1200 time stamps of fMRI signals.

This experimental design mainly follows the framework proposed by Deco et al~\cite{deco2025complex}. To assess the fidelity of manifold-based dimensionality reduction into the latent space, we compared functional connectivity (FC) computed from the original time series with FC from the reconstructed time series for each participant. We also performed systematic comparisons across different kernel bandwidths, values of the parameter $\omega$, and numbers of diffusion steps in order to evaluate robustness.

In addition, we employed the edge-centric metastability (ECM) metric to quantify how well the CDM captures fine-scale temporal dynamics. ECM measures the consistency of edge-level dynamics between the original source space and the low-dimensional embedding, thereby indicating how effectively the manifold embedding preserves transient, edge-wise patterns. 

Below we provide a detailed description of the FC and ECM evaluation metrics.
\subsection{Functional connectivity (FC)}
Following the Geometric Harmonics (GHs) framework~\cite{coifman2006geometric}, the low-dimensional embeddings of the generalized samples $\bm{\varPhi}_{\mathrm{G}}^{t}$ can be lifted back to the original data space. Each feature of the original data is treated first as a function defined in the existing samples. These functions are then extended to the new points via the eigenbasis of the kernel matrix. Due to space limitations, the detailed derivation process can be found in~\cite{papaioannou2022time,patsatzis2023data}. This leads to the following reconstruction formula:
\begin{displaymath}
    \bm{X}_{\mathrm{G}} = \mathrm{Real}(\bm{A}_{\mathrm{G}} \bm{\varPhi} \bm{\varLambda}^{-\frac{t}{2}} \bm{\varPhi}^{*} \bm{X}),
\end{displaymath}
where $\bm{X} \in \mathbb{R}^{N \times M}$ denotes the original data matrix and $\bm{X}_\mathrm{G} \in \mathbb{R}^{N_\mathrm{G} \times M}$ denotes the generalized data matrix. 
Here, the matrix $\bm{\varPhi}^{*} \bm{X}$ represents the projection of the original features onto the eigenbasis, and the left multiplication by $\bm{A}_{\mathrm{G}} \bm{\Phi} \bm{\varLambda}^{-\frac{t}{2}}$ corresponds to the Nyström-based extension as Eq.~\eqref{eq_matrix_out_of_sample} depicts, of these coefficients to the generalized points.

Following the experimental setup of Deco et al.~\cite{deco2025complex}, we explore the low-dimensional embeddings across the time dimension. Let $\bm{X}\in \mathbb{R}^{T\times M}$ denote the observed BOLD signal matrix, where $T$ is the number of time recordings and $M$ the number of brain regions. 
For each brain region $m$, we define its corresponding time series $\bm{x}_m$ as the $m$-th column of $\bm{X}$. 
The empirical functional connectivity, denoted as $\mathrm{FC}$, is computed using the Pearson correlation:
\begin{displaymath}
(\mathrm{FC})_{m,l} = \frac{\mathbb{E}_t \big[ (\bm{x}_m - \mu_m)(\bm{x}_l - \mu_l) \big]}{\sigma_m \sigma_l},
\end{displaymath}
where $\mu_m$ and $\sigma_m$ represent the mean and standard deviation of $\bm{x}_m$ across time, 
and $\mathbb{E}_t[\cdot]$ denotes the empirical average over time dimension. Similarly, the functional connectivity for the reconstructed time series $\bm{X}_\mathrm{G}$, 
denoted $\mathrm{FC}_{\mathrm{G}}$, is given by
\begin{displaymath}
(\mathrm{FC}_{\mathrm{G}})_{m,l} = \frac{\mathbb{E}_t \big[ (\hat{\bm{x}}_{m} - \hat{\mu}_m)(\hat{\bm{x}}_l - \hat{\mu}_l) \big]}{\hat{\sigma}_m \hat{\sigma}_l},
\end{displaymath}
where $\hat{\bm{x}}_m$ is the reconstructed time series corresponding to region $m$, and $\hat{\mu}_m$ and $\hat{\sigma}_m$ represent the mean and standard deviation. 
To quantify the similarity between $\mathrm{FC}$ and $\mathrm{FC}_{\mathrm{G}}$,  we consider the mean squared error of all brain regions as \emph{FC reconstruction error},
\begin{equation}\label{eq_err_fc}
    \mathrm{Err_{FC}}=\frac{1}{M^2}\sum_{m,l}^{M}[(\mathrm{FC})_{m,l}-(\mathrm{FC_G})_{m,l}]^2=\frac{1}{M^2}\|\mathrm{FC-FC_{\mathrm{G}}}\|_{\mathrm{F}}^2.
\end{equation}
The correlation between the elements of the original and generalized FC matrices is given by
\begin{displaymath}
\mathrm{Corr_{FC}}=\frac{\mathbb{E}_{m,l} \big[ \big((\mathrm{FC})_{m,l} - \mu\big)\big((\mathrm{FC_G})_{m,l} - \mu_{\mathrm{G}}\big) \big]}{\sigma \, \sigma_{\mathrm{G}}},
\end{displaymath}
where $\mathbb{E}_{m,l}[\cdot]$ denotes the empirical average over all elements of the original and generalized functional connectivity matrices,
$\mu = \mathbb{E}_{m,l}[(\mathrm{FC})_{m,l}]$ and
$\mu_{\mathrm{G}} = \mathbb{E}_{m,l}[(\mathrm{FC_G})_{m,l}]$ are the corresponding mean values,
and $\sigma$, $\sigma_{\mathrm{G}}$ denote their standard deviations.
\subsection{Edge-centric metastability (ECM)}
Edge-centric metastability has been proposed as a metric for assessing the variability in edge cofluctuation patterns, capturing fine-scale dynamics in fMRI recordings~\cite{faskowitz2020edge}. 
For the BOLD signal matrix mentioned above $\bm{X}\in \mathbb{R}^{T\times M}$, we first z-score each column across time, 
obtaining the normalized signal $\tilde{\bm{X}}$.  
From $\tilde{\bm{X}}$, we construct the edge-centered matrix 
\begin{displaymath}
    \bm{E} = [\bm{e}_1^\top, \ldots, \bm{e}_{T}^\top]^\top \in \mathbb{R}^{T \times M (M- 1)},
\end{displaymath}
where each row of the edge-centered matrix $\bm{e}_t^\top$ encodes all pairwise products of the node values at time $t$, i.e., 
the instantaneous cofluctuation pattern across edges.
For this edge representation, the functional connectivity dynamics (FCD) matrix is defined as
\begin{displaymath}
(\mathrm{FCD})_{t,s}= 
\frac{\bm{e}_{t}^\top \, \bm{e}_{s}}{\| \bm{e}_{t} \| \, \| \bm{e}_{s} \|}, 
\end{displaymath}
where $\bm{e}_{t}, \bm{e}_{s}$ denote the $t$-th and $s$-th row of $\bm{E}$, 
and $\| \cdot \|$ denotes the $\ell_2$ norm of a vector. The edge-centric metastability of the spatiotemporal signal $\bm{X}$ is then defined via the Gaussian entropy:
\begin{equation}
\label{eq:entropy}
H_{\bm{X}}=\tfrac{1}{2} \log \left( 2 \pi \, \sigma_\mathrm{FCD}^2 \right) + \tfrac{1}{2},
\end{equation}
where $\sigma_\mathrm{FCD}$ is the variance of the upper-triangular elements of FCD.
Thus, $H_{\bm{X}}$ quantifies the degree of variability in edge cofluctuation patterns. 
For each participant, we compute the edge-centric metastability in the source space, $H_{\bm{X}}$, 
using Eq.~\eqref{eq:entropy}, and in the manifold space, the reduced space obtained from the embedding representation, $H_{\bm{Y}}$.

The conservation of edge-centric metastability between source and manifold space is then quantified 
by computing the correlation across participants:
\begin{displaymath}
\mathrm{Corr_{ECM}} = 
\frac{\mathbb{E}_i \Big[ (H_{\bm{X}}^{(i)} - \mu_{\bm{X}})(H_{\bm{Y}}^{(i)} - \mu_{\bm{Y}}) \Big]}
{\sigma_{\bm{X}}\, \sigma_{\bm{Y}}},
\end{displaymath}
where $i$ indexes participants, $\mu_{\bm{X}}$ and $\mu_{\bm{Y}}$ denote the mean values of $H_{\bm{X}}^{(i)}$ and $H_{\bm{Y}}^{(i)}$, respectively, and $\sigma_{\bm{X}}$ and $\sigma_{\bm{Y}}$ denote their corresponding standard deviations.
\section{EEG sleep stage experiment}
\subsection{Dataset description}\label{Dataset_description}
The ISRUC-S3 dataset~\cite{khalighi2016isruc}, which contains overnight polysomnography (PSG) signals recorded from 10 healthy subjects, including 9 males and 1 female, aged between 30 and 58 years. The recordings are annotated into five sleep stages, namely Wake, N1, N2, N3, and rapid eye movement (REM) according to the American Academy of Sleep Medicine (AASM) standard as follows. 
\begin{itemize}
  \item \textbf{Wake}: The awake state characterized by dominant alpha activity in the electroencephalography signals.
  \item \textbf{N1}: The transitional stage from wakefulness to sleep, exhibiting low-amplitude mixed-frequency activity.
  \item \textbf{N2}: A light sleep stage identified by the presence of sleep spindles and K-complexes.
  \item \textbf{N3}: The deep sleep stage dominated by high-amplitude slow-wave activity.
  \item \textbf{REM}: Rapid eye movement sleep, associated with vivid dreaming and rapid eye movements.
\end{itemize}
In this experiment, 10 channels are used, including 6 electroencephalography (EEG) channels, namely C3-A2, C4-A1, F3-A2, F4-A1, O1-A2, and O2-A1, 2 electrooculography (EOG) channels, namely LOC-A2 and ROC-A1, 1 chin electromyography (EMG) channel, and 1 electrocardiography (ECG) channel. All signals are downsampled from 200 Hz to 100 Hz. The final 30 seconds sleep of each epoch are excluded. To exploit the neighborhood information among adjacent segments, each target segment is concatenated with its two neighboring segments and used jointly as the input for subsequent analysis. The number of samples for each sleep stage is summarized as Table~\ref{tab:isruc_stage_distribution}.

\begin{table}[htbp]
\centering
\caption{The number of samples for each sleep stage of the ISRUC-S3 dataset.}
\label{tab:isruc_stage_distribution}
\setlength{\tabcolsep}{10pt}
\begin{tabular}{c ccccc c}
\toprule
Subject & Wake & N1 & N2 & N3 & REM & Total \\
\midrule
1  & 162 & 114 & 369 & 178 & 101 & 924 \\
2  & 100 & 141 & 323 & 197 & 150 & 911  \\
3  & 80  & 64  & 249 & 298 & 103 & 794  \\
4  & 144 & 137 & 234 & 159 & 90  & 764  \\
5  & 301 & 70  & 267 & 195 & 81  & 914  \\
6  & 57  & 133 & 290 & 247 & 96  & 823  \\
7  & 206 & 63  & 155 & 262 & 98  & 784  \\
8  & 375 & 109 & 194 & 143 & 149 & 970  \\
9  & 122 & 167 & 362 & 225 & 63  & 939  \\
10 & 127 & 219 & 173 & 112 & 135 & 766  \\
\midrule
\textbf{Total} & \textbf{1674} & \textbf{1217} & \textbf{2616} & \textbf{2016} & \textbf{1066} & \textbf{8589} \\
\bottomrule
\end{tabular}
\end{table}
\subsection{Baselines}\label{Baseline methods}
To comprehensively evaluate the effectiveness of CDM in the sleep staging task on the ISRUC-S3 dataset, we select several representative benchmark methods for both clustering and classification tasks.
\subsubsection{Baselines for clustering}\label{baselines_clustering}
\begin{itemize}
    \item \textbf{PCA}~\cite{wold1987principal}: 
    A classical linear dimensionality reduction method that projects high-dimensional data onto a low-dimensional subspace by maximizing the variance of the projected samples while preserving the dominant global structure.

    \item \textbf{KPCA}~\cite{scholkopf1997kernel}: 
    A nonlinear extension of principal component analysis that maps data into a high-dimensional feature space through kernel functions and performs dimensionality reduction by extracting principal components in the transformed space.

    \item \textbf{ISOMAP}~\cite{tennenbaum2000global}: 
    A manifold learning method that preserves the global geometric structure of data by approximating geodesic distances on a neighborhood graph and embedding the samples into a low-dimensional space.

    \item \textbf{LLE}~\cite{roweis2000nonlinear}: 
    A neighborhood-preserving manifold learning approach that computes low-dimensional embeddings by maintaining the local linear reconstruction relationships among neighboring data points.

    \item \textbf{DM}~\cite{coifman2005geometric}: 
    A diffusion-based manifold learning method that constructs a Markov random walk on the data graph and derives low-dimensional representations by capturing the intrinsic geometry and connectivity structure of the data.
\end{itemize}
\subsubsection{Baselines for classification}\label{baselines_classify}
\begin{itemize}
    \item \textbf{RF}~\cite{memar2017novel}: 
    A feature-based sleep stage classification method that decomposes EEG signals into multiple frequency subbands, applies statistical and relevance-based feature selection, and performs classification using a random forest ensemble.
    \item \textbf{SVM}~\cite{alickovic2018ensemble}: 
    A single-channel EEG sleep scoring framework that combines wavelet-based feature extraction with a rotational support vector machine to enhance robustness and classification accuracy.
    \item \textbf{MLP+LSTM}~\cite{dong2017mixed}: 
    A EEG sleep stage classification framework that combines hierarchical feature extraction using multilayer perceptrons with long short-term memory networks to model temporal dependencies in sleep stage transitions.
    \item \textbf{SalientSleepNet}~\cite{ijcai2021p360}: 
    A multimodal deep learning model that leverages parallel encoder--decoder convolutional pathways to hierarchically capture salient temporal patterns from heterogeneous physiological signals.
    \item \textbf{MSTGCN}~\cite{jia2021multi}: 
    A multi-view spatial--temporal graph convolutional network that jointly models brain connectivity, temporal transitions, and domain generalization to learn subject-invariant sleep representations.
    \item \textbf{AttenNet}~\cite{eldele2021attention}: 
    A deep neural architecture that integrates convolutional feature extraction with multi-head attention and causal temporal modeling to emphasize discriminative sleep-related dynamics.
    \item \textbf{SleepPrintNet}~\cite{jia2021sleepprintnet}: 
    A multimodal network that fuses complementary temporal, spectral--spatial, and cross-modal EEG features to form discriminative sleep stage representations.
\end{itemize}

\subsection{Evaluation matrices}\label{Evaluation matrices}
Let the dataset contain $N$ = 8589 samples, the true labels be denoted as $\bm{Y}=\{y_1,y_2,\cdots,y_N\}$, and the prediction results be denoted as $\hat{\bm{Y}}=\{\hat{y}_1,\hat{y}_2,\dots,\hat{y}_N\}$. In unsupervised clustering, $\hat{y}_i$ represents the cluster label of the $i$-th sample. In supervised classification, $\hat{y}_i$ represents the predicted class label of the $i$-th sample. Let the total number of classes be $K=5$. 

\subsubsection{Evaluation matrices for clustering}\label{Evaluation_matrices_clu}
\begin{itemize}
    \item \textbf{Clustering Accuracy, (${\mathrm{ACC}_{\mathrm{clu}}}$):}  
    ${\mathrm{ACC}_{\mathrm{clu}}}$ evaluates clustering performance by measuring the maximum matching accuracy between clustering assignments and ground-truth labels after optimal label permutation. It is defined as
    \[
        \mathrm{ACC}_{\mathrm{clu}}
        =
        \frac{1}{N}\max_{\pi}\sum_{i=1}^{N}\mathbf{1}\!\left(y_i=\pi(\hat{y}_i)\right),
    \]
    where $\pi(\cdot)$ denotes the permutation mapping from clustering labels to true labels, and $\mathbf{1}(\cdot)$ is the indicator function, which equals $1$ if the condition holds and $0$ otherwise. $\mathrm{ACC}_{\mathrm{clu}}\in[0,1]$, and a larger value indicates better agreement between clustering results and ground-truth labels.

    \item \textbf{Adjusted Rand Index(ARI)~\cite{hubert1985comparing}:}  
    ARI is a pairwise clustering similarity metric that quantifies the agreement between clustering assignments and ground-truth labels while correcting for chance. For any pair of samples, let:
    \begin{itemize}
    \renewcommand{\labelitemii}{$\scriptscriptstyle\bullet$}
        \item $\mathrm{TP}$: the number of sample pairs assigned to the same class in both the ground-truth labels and clustering results;
        \item $\mathrm{TN}$: the number of sample pairs assigned to different classes in both the ground-truth labels and clustering results;
        \item $\mathrm{FP}$: the number of sample pairs assigned to the same cluster but belonging to different ground-truth classes;
        \item $\mathrm{FN}$: the number of sample pairs assigned to different clusters but belonging to the same ground-truth class.
    \end{itemize}
    The Rand Index (RI) is first defined as
    \[
        \mathrm{RI}
        =
        \frac{\mathrm{TP}+\mathrm{TN}}
        {\mathrm{TP}+\mathrm{TN}+\mathrm{FP}+\mathrm{FN}}.
    \]
    Based on RI, the Adjusted Rand Index is given by
    \[
        \mathrm{ARI}
        =
        \frac{\mathrm{RI}-\mathbb{E}(\mathrm{RI})}
        {\max(\mathrm{RI})-\mathbb{E}(\mathrm{RI})},
    \]
    where $\mathbb{E}(\mathrm{RI})$ denotes the expected RI under random label assignments, and $\max(\mathrm{RI})$ denotes the maximum possible value of RI.  
    $\mathrm{ARI}\in[-1,1]$. A value of $1$ indicates perfect agreement, a value close to $0$ indicates random-level agreement, and a negative value indicates worse-than-random clustering performance.

    \item \textbf{Normalized Mutual Information,(NMI)~\cite{vinh2009information}:}  
    NMI is an information-theoretic metric that evaluates clustering quality by measuring the statistical dependence between clustering assignments and true labels. It is defined as
    \[
        \mathrm{NMI}
        =
        \frac{I(Y,\hat{Y})}{\sqrt{H(Y)H(\hat{Y})}},
    \]
    where $I(Y,\hat{Y})$ denotes the mutual information between the ground-truth label variable $Y$ and the clustering label variable $\hat{Y}$, and $H(Y)$ and $H(\hat{Y})$ denote their corresponding entropies. Specifically,
    \[
        I(Y,\hat{Y})
        =
        \sum_{i=1}^{K}\sum_{j=1}^{K}
        p(i,j)\log\frac{p(i,j)}{p(i)p(j)},
    \]
    \[
        H(Y)=-\sum_{i=1}^{K}p(i)\log p(i), \qquad
        H(\hat{Y})=-\sum_{j=1}^{K}p(j)\log p(j),
    \]
    where $p(i)$ is the probability that a sample belongs to the true class $i$, $p(j)$ is the probability that a sample is assigned to cluster $j$, and $p(i,j)$ is their joint probability.  
    $\mathrm{NMI}\in[0,1]$, and a larger value indicates stronger consistency between clustering assignments and true labels.

    \item \textbf{Fisher Discriminant Ratio, (FDR)~\cite{fisher1936use}:}  
    FDR evaluates the discriminative quality of low-dimensional embeddings by comparing between-class scatter with within-class scatter. Let $N_k$ denote the number of samples in class $k$, $\boldsymbol{\mu}_k$ the mean embedding vector of class $k$, and $\boldsymbol{\mu}$ the global mean embedding vector of all samples. The between-class scatter matrix and within-class scatter matrix are defined as
    \[
        \boldsymbol{S}_B
        =
        \sum_{k=1}^{K} N_k (\boldsymbol{\mu}_k-\boldsymbol{\mu})(\boldsymbol{\mu}_k-\boldsymbol{\mu})^{\top},
    \]
    \[
        \boldsymbol{S}_W
        =
        \sum_{k=1}^{K}\sum_{y_i=k}(\boldsymbol{f}_i-\boldsymbol{\mu}_k)(\boldsymbol{f}_i-\boldsymbol{\mu}_k)^{\top},
    \]
    where $\boldsymbol{f}_i$ denotes the embedding vector of the $i$-th sample. Based on these quantities, the Fisher Discriminant Ratio is defined as
    \[
        \mathrm{FDR}
        =
        \frac{\operatorname{tr}(\boldsymbol{S}_B)}{\operatorname{tr}(\boldsymbol{S}_W)},
    \]
    where $\operatorname{tr}(\cdot)$ denotes the trace of a matrix. A larger $\mathrm{FDR}$ indicates better class separability, corresponding to larger inter-class distances and smaller intra-class variations.
\end{itemize}

\subsubsection{Evaluation matrices for classification}\label{Evaluation_matrices_cls}
\begin{itemize}
    \item \textbf{Accuracy ($\mathrm{ACC_{cls}}$)}:
    $\mathrm{ACC_{cls}}$ measures the proportion of correctly classified sleep epochs among all samples. Let $\boldsymbol{C} \in \mathbb{R}^{K \times K}$ denote the confusion matrix, where $\boldsymbol{C}_{ii}$ represents the number of correctly classified samples of class $i$.
    The overall accuracy is defined as
    \begin{displaymath}
        \mathrm{ACC_{cls}} = \frac{\sum_{i=1}^{K} \boldsymbol{C}_{ii}}{\sum_{i=1}^{K} \sum_{j=1}^{K} \boldsymbol{C}_{ij}}.
    \end{displaymath}
    $\mathrm{ACC}_{\mathrm{cls}}\in[0,1]$. When $\mathrm{ACC}_{\mathrm{cls}}$ is closer to $1$, it indicates a higher consistency between the predicted labels and the true labels, and better classification performance.

    \item \textbf{F1-score}:
    F1-score is the harmonic mean of Precision and Recall, and is used to comprehensively evaluate the balance between the model's positive predictive ability and sensitivity. For the $i$-th class, let
    \begin{itemize}
    \renewcommand{\labelitemii}{$\scriptscriptstyle\bullet$}
        \item $\mathrm{TP}_i$: the number of samples that are correctly predicted as belonging to the $i$-th class;
        \item $\mathrm{FP}_i$: the number of samples that are predicted as belonging to the $i$-th class but actually belong to other classes;
        \item $\mathrm{FN}_i$: the number of samples that actually belong to the $i$-th class but are predicted as other classes.
    \end{itemize}
    
    These quantities can be computed from the confusion matrix $\boldsymbol{C}$ as
    \[
        \mathrm{TP}_i=C_{ii},\qquad
        \mathrm{FP}_i=\sum_{j=1}^{K}C_{ji}-C_{ii},\qquad
        \mathrm{FN}_i=\sum_{j=1}^{K}C_{ij}-C_{ii}.
    \]
    
    Accordingly, the Precision and Recall for the $i$-th class are defined as
    \[
        \mathrm{Precision}_i
        =
        \frac{\mathrm{TP}_i}{\mathrm{TP}_i+\mathrm{FP}_i},
        \qquad
        \mathrm{Recall}_i
        =
        \frac{\mathrm{TP}_i}{\mathrm{TP}_i+\mathrm{FN}_i}.
    \]
    
    The F1-score for the $i$-th class is then given by
    \[
        \mathrm{F1}_i
        =
        \frac{2\,\mathrm{Precision}_i\,\mathrm{Recall}_i}
        {\mathrm{Precision}_i+\mathrm{Recall}_i}.
    \]
    
    The overall F1-score across all classes is computed using the macro-average:
    \[
        \mathrm{F1}
        =
        \frac{1}{K}\sum_{i=1}^{K}\mathrm{F1}_i.
    \]
    
    $\mathrm{F1}\in[0,1]$. A larger value indicates better classification performance. As a macro-averaged metric, F1 assigns equal importance to each class and can therefore provide a more comprehensive evaluation of the model's classification performance across different sleep stages.

    \item \textbf{Cohen's Kappa ($\kappa$)}:
    The Kappa coefficient $\kappa$ measures the degree of agreement between model predictions and true labels beyond chance-level consistency. 
    
    The observed agreement $p_o$, which coincides with the definition of accuracy above, is given by
    \[
        p_o = \frac{\sum_{i=1}^{K} \boldsymbol{C}_{ii}}{\sum_{i=1}^{K} \sum_{j=1}^{K} \boldsymbol{C}_{ij}}.
    \]
    The expected agreement by chance is defined as
    \[
        p_e = \frac{1}{N^2} \sum_{i=1}^{K}
        \left( \sum_{j=1}^{K} \boldsymbol{C}_{ij} \right)
        \left( \sum_{j=1}^{K} \boldsymbol{C}_{ji} \right).
    \]
    Accordingly, Cohen's Kappa coefficient is defined as
    \[
        \kappa = \frac{p_o - p_e}{1 - p_e}.
    \]
    $\kappa \in [0,1]$. It provides a more objective assessment of classification reliability, particularly in scenarios with class imbalance.
\end{itemize}

\subsection{Results of cross-subject classification}\label{appendix_eeg_cross_result}
This section presents the results of the \textbf{cross-subject setting} on the ISRUC-S3 dataset~\cite{khalighi2016isruc}. Cross-subject means that data from 8 subjects are used for training, data from 1 subject for validation, and data from the remaining subjects for testing. Each subject is used only once as the test set, achieving 10-fold cross-validation. This setup is designed to evaluate CDM's cross-subject generalization ability.

\begin{figure}[htbp]
    \centering
    \includegraphics[width=0.95\textwidth]{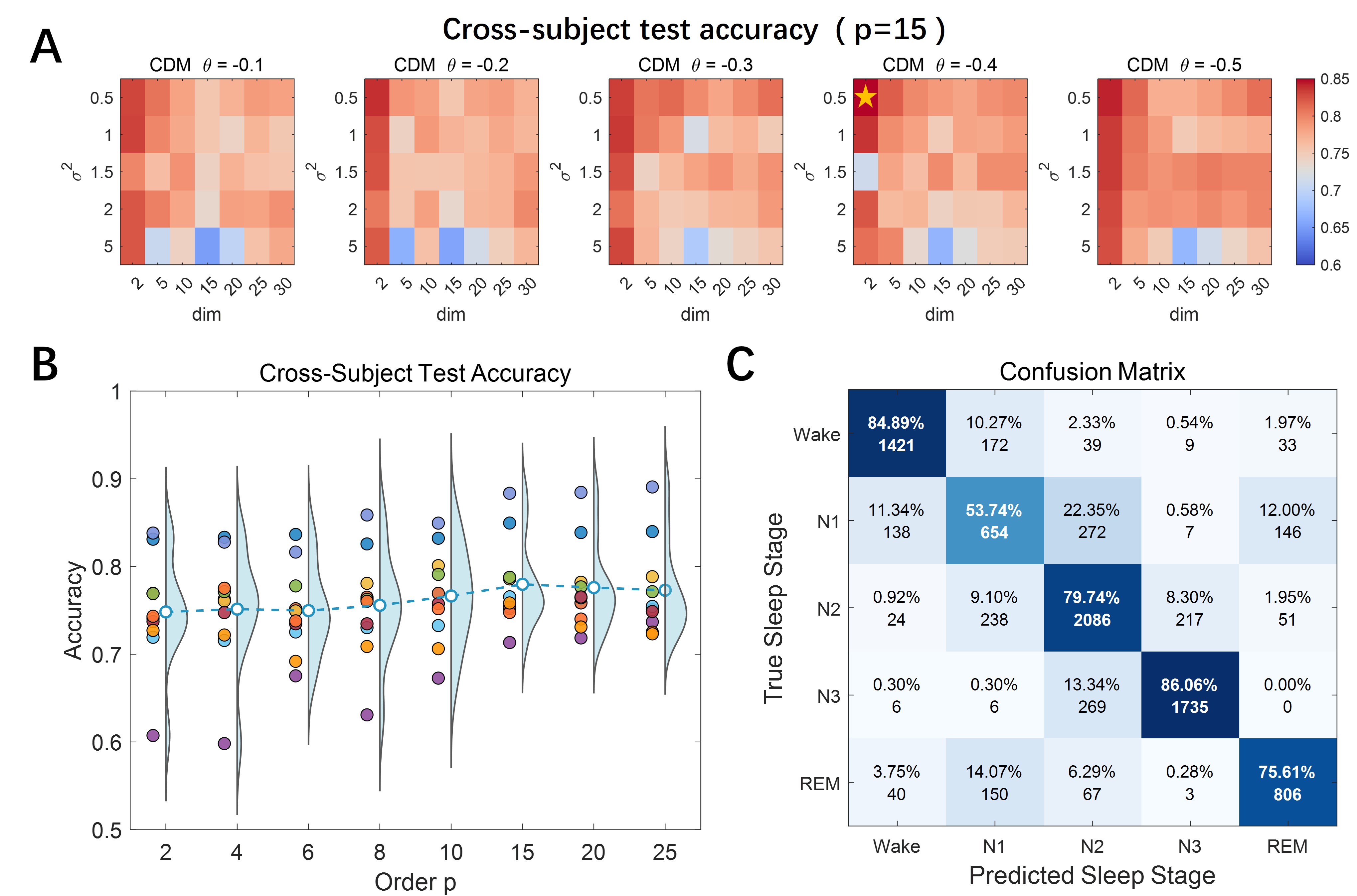}
    \caption{\footnotesize
10-fold cross-validation results of CDM under the cross-subject setting on the ISRUC-S3 dataset~\cite{khalighi2016isruc}. (A) Test accuracy $\mathrm{ACC_{cls}}$ of one fold under the stacking order $p=15$. The variable \textit{dim} denotes the dimension of the low-dimensional embeddings used for SVM~\cite{cortes1995support} classification. The best performance is marked with a star. (B) The best test accuracy of each fold under different order $p$, where colored scatter points represent individual folds. The hollow marker indicates the mean accuracy over the 10 folds, and the optimal stacking order is $p=15$. (C) The confusion matrix corresponding to the overall results at the optimal $p=15$.
}
\label{eeg_class_cross_result}
\end{figure}

For the classification baselines, several deep learning models specifically designed for EEG sleep stage classification are selected for comparison, as listed in~\ref{baselines_classify}. To ensure fairness, each method use its original input format without additional high-order stacking. Hyperparameters, including the number of layers, learning rate, and batch size, are selected by grid search to obtain the best-performing configuration under each experimental setting.

Fig.~\ref{eeg_class_cross_result} presents the classification performance of CDM under the cross-subject settings. The results show that appropriate order stacking remains effective for processing larger sample sizes. Fig.~\ref{eeg_class_cross_result}~(B) indicates that, in the cross-subject case, the optimal order of CDM is $p=15$. Comparing the optimal order between the intra-subject (see Fig.~\ref{eeg_class_result}) and the cross-subject setting, the former achieves a relatively lower order ($p=4$) than the latter ($p=15$). This suggests that low-order representations are sufficient to model dynamics from a single subject , where subject-specific characteristics are relatively consistent. In contrast, cross subject modeling requires substantially richer feature hierarchies and deeper spatial stacking to capture higher-dimensional latent maps that account for inter-subject variability and distribution shifts across subjects. 

\begin{table*}[htbp]
\centering
\small
\setlength{\tabcolsep}{4pt}
\caption{\footnotesize{Comparison of the sleep stage classification performance under the cross-subject setting. Evaluation metrics are defined in Appendix~\ref{Evaluation_matrices_cls}. The bolded and underlined results represent the best and second-best, respectively, for partial results.}}
\label{tab:isruc_s3_cross_comparison}
\setlength{\tabcolsep}{6pt}
\begin{tabular}{c c c c c c c c c c}
\toprule
\multirow{2}{*}{\makecell{Setting}} &
\multirow{2}{*}{Method} &
\multicolumn{3}{c}{Overall results} &
\multicolumn{5}{c}{F1-score for each class}\\
\cmidrule(lr){3-5} \cmidrule(lr){6-10}
& & $\mathrm{ACC_{cls}}$ & F1 & Kappa & Wake & N1 & N2 & N3 & REM  \\
\cmidrule(lr){1-10}
\multirow{8}{*}{\makecell{Cross-\\subject}}
& RF~\cite{memar2017novel}                
& 0.729 & 0.708 & 0.648 & 0.858 & 0.473 & 0.704 & 0.809 & 0.699 \\
& SVM~\cite{alickovic2018ensemble}        
& 0.733 & 0.721 & 0.657 & \textbf{0.868} & \underline{0.523} & 0.699 & 0.786 & 0.731 \\
& MLP+LSTM~\cite{dong2017mixed}           
& \underline{0.779} & \underline{0.758} & \underline{0.713} & 0.860 & 0.469 & \underline{0.760} & \textbf{0.875} & \textbf{0.828}\\
& \textbf{CDM}                    
& \textbf{0.780} & \textbf{0.763} & \textbf{0.717} & \underline{0.860} & \textbf{0.537} & \textbf{0.780} & \underline{0.870} & \underline{0.767} \\
\cmidrule(lr){2-10}
& SalientSleepNet~\cite{ijcai2021p360}    
& 0.785 & 0.764 & 0.723 & 0.837 & 0.521 & 0.795 & 0.875 & 0.793\\
& MSTGCN~\cite{jia2021multi}              
& 0.799 & 0.788 & 0.740 & 0.872 & 0.567 & 0.789 & 0.874 & 0.837\\
& AttenNet~\cite{eldele2021attention}    
& 0.801 & 0.776 & 0.743 & 0.893 & 0.540 & 0.809 & 0.889 & 0.748\\
& SleepPrintNet~\cite{jia2021sleepprintnet} 
& 0.825 & 0.806 & 0.775 & 0.894 & 0.592 & 0.852 & 0.895 & 0.829\\
\bottomrule
\end{tabular}
\end{table*}

For a comprehensive evaluation, we still compared CDM with several baselines. All baseline methods are specially designed for EEG-based sleep signals. A brief description of each method is in Appendix ~\ref{Baseline methods}. Table~\ref{tab:isruc_s3_cross_comparison} reports the classification performance on the ISRUC-S3 dataset under the cross-subject settings. It shows that CDM outperforms conventional machine learning approaches (RF and SVM) as well as the hybrid MLP+LSTM model in terms of the classification accuracy $\mathrm{ACC_{cls}}$, average F1 score, and the Kappa coefficient. 

The results in the upper part of Table~\ref{tab:isruc_s3_cross_comparison} also shows that CDM achieves higher F1-scores than the other three methods for both the N1 and N2 stages. N1 and N2 are transitional sleep stages that are inherently similar and therefore more prone to confusion in classification tasks. One possible reason for this advantage is that CDM can extract features beyond Euclidean distance, enabling it to capture more complex relationships embedded in the data. This allows it to better capture subtle differences in easily confused signals such as N1 and N2. This demonstrates that CDM does indeed have a stronger ability to discriminate confused signals in real-world, complex scenarios.

\begin{figure}[htbp]
    \centering
    \includegraphics[width=0.85\textwidth]{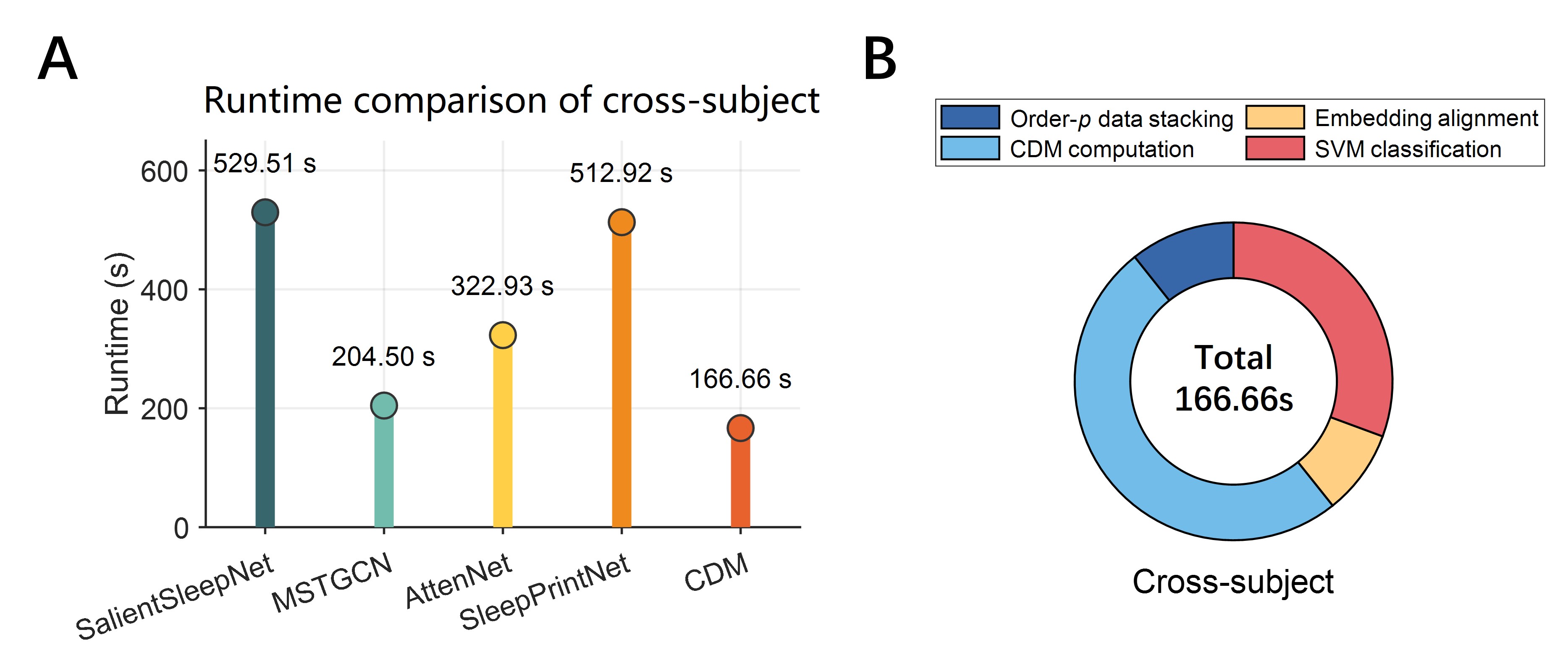}
    \caption{\footnotesize{Display of runtime on cross-subject setting. Runtime refers to the average runtime of 10-fold experiments with their respective optimal parameters. (A) Comparison of the runtime of CDM with baselines. (B) Breakdown of CDM’s runtime across its four phases, Order-$p$ data stacking, CDM computation, embedding alignment, and SVM classification.} } 
    \label{runtime_cross}
\end{figure}

Combined with the lower part of Table~\ref{tab:isruc_s3_cross_comparison} and Fig.~\ref{runtime_cross}, it can be observed that CDM maintains a clear advantage in computational efficiency compared with neural network models specifically tailored for EEG-based sleep stage classification. In particular, CDM reduces the runtime by approximately $67.5\%$ compared with the best-performing deep learning baseline, highlighting its superiority in terms of computational cost and resource efficiency.

\begin{figure}[htbp]
    \centering
    \includegraphics[width=0.55\textwidth]{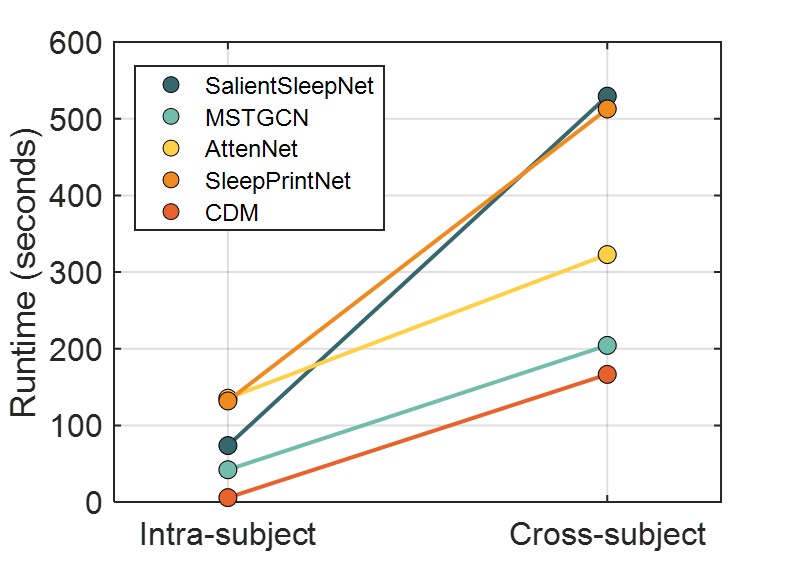}
    \caption{\footnotesize{Display of runtime on intra-subject and cross-subject. }} 
    \label{runtime_intra_cross}
\end{figure}

Fig.~\ref{runtime_intra_cross} highlights the scalability of the proposed CDM. Although the runtime of all methods increases with larger training sets, multimodal neural network–based approaches suffer from a much steeper growth in computational cost, whereas CDM remains consistently efficient with the lowest overall runtime and a moderate scaling trend. A detailed runtime breakdown further indicates that the core CDM computation accounts for only about half of the total cost, and the dominant increase in runtime with larger samples mainly arises from the classifier stage, demonstrating the computational efficiency and scalability of the proposed framework.

\bibliographystyle{unsrt}
\newpage
\bibliography{reference}

\end{document}